\documentclass{article}

\usepackage[final]{neurips_2026}
\usepackage[utf8]{inputenc}
\usepackage[T1]{fontenc}
\usepackage{hyperref}
\usepackage{url}
\usepackage{booktabs}
\usepackage{amsfonts}
\usepackage{amsmath}
\usepackage{amssymb}
\usepackage{amsthm}
\usepackage{nicefrac}
\usepackage{microtype}
\usepackage{xcolor}
\usepackage{graphicx}
\usepackage{subcaption}
\usepackage[export]{adjustbox}
\newtheorem{proposition}{Proposition}

\title{The Routing and Filtering Structure of Attention}

\author{%
  Shafayeth Jamil \quad Rehan Kapadia \\
  Ming Hsieh Department of Electrical and Computer Engineering \\
  University of Southern California \\
  Los Angeles, CA 90089 \\
  \texttt{sjamil@usc.edu, rkapadia@usc.edu} \\
}

\makeatletter
\def\@noticestring{}
\makeatother

\begin{document}
\maketitle

\begin{abstract}
The attention interaction matrix $QK^{\top}$ contains two entangled computations: a skew-symmetric component that redistributes information between positions (routing) and a symmetric component that scales mutual relevance (filtering). We decompose 1{,}776 heads across five pretrained transformers and find routing operating at low rank, well below the routing capacity allocated by the weight kernel. We introduce $S$--$D$ attention as a diagnostic parameterization that disentangles routing from filtering by construction with guaranteed stability ($\mathrm{Re}(\lambda) \le 0$) and trains stably without layer normalization. When disentangled and unnormalized, routing self-organizes into a spectral cascade, effective rank $2$ at the first layer, expanding with depth across six scales from 7M to 355M parameters. The cascade predicts where attention can be simplified: linearizing the first seven layers of 125M $S$--$D$ attention costs ${<}5\%$ perplexity, whereas standard attention collapses under the same intervention. The linearizable region widens with depth. Replacing the first four layers with ELU+1 linear attention reaches within $1.4\%$ of baseline at full head dimension. Cascade-allocated architectures trade attention parameters for perplexity (47--65\% fewer attention parameters at $+3.9\%$ to $+8.4\%$ PPL). The routing--filtering decomposition makes the spectral budget legible; the cascade makes it actionable.
\end{abstract}

\section{Introduction}

Every classic transformer allocates attention uniformly across depth [1]. Each layer receives the same head dimension, the same number of parameters, the same $O(N^2)$ computation regardless of what that layer actually does. No principle exists to guide non-uniform allocation because no measurement exists to motivate it. The consequences are visible in every recent architecture. Nemotron-H places full attention in roughly $8\%$ of layers and fills the rest with linear recurrences [2]. Jamba interleaves attention and Mamba [7] at a $1{:}7$ ratio [3]. Griffin, Zamba, RecurrentGemma [4, 5, 6], each proposes a different split between quadratic and linear layers, discovered through extensive sweeps over interleaving patterns. The designs work but the methodology is search, not measurement. Each group finds a good ratio for their scale and training budget, with no guarantee that it transfers to the next.

The problem is not that attention wastes capacity. The problem is that attention hides where its capacity goes. The interaction matrix $QK^{\top}$ entangles two computations in a single unconstrained matrix: a skew-symmetric component (routing) that routes information directionally between positions, and a symmetric component (filtering) that filters by evaluating mutual relevance. These two functions have different rank requirements, different depth profiles, and different computational costs; but standard attention does not separate them. The result is that both are invisible from outside the mechanism.

We apply this decomposition to 1{,}776 heads across five pretrained transformers which reveals a consistent pattern. Routing operates at low rank on real text, well below the capacity allocated by the weight kernel. The kernel allocates capacity with a depth profile; the heads do not exercise that structure. Routing and filtering compete for the same matrix, and filtering dominates. To reveal what the spectral structure would be if it could organize freely, we separate the two functions by construction. $S$--$D$ attention replaces the unconstrained $QK^{\top}$ with $L = S - D$, where $S$ is a learned skew-symmetric matrix and $D$ is a learned positive diagonal. This guarantees $\mathrm{Re}(\lambda) \le 0$ for every eigenvalue; a stability condition strong enough to eliminate layer normalization entirely. The model trains stably at 355M parameters with no normalization of any kind.

When routing and filtering are disentangled and no normalization equalizes representations across depth, routing self-organizes into a rank hierarchy or spectral cascade. First layer converges to rank two; a single rotation plane that broadly redistributes information across the sequence. The final layer expands to ten or more independent rotation planes. This cascade appears at every scale we test, from 7M to 355M parameters. The fraction of the network operating at minimum spectral rank grows with scale and with it, the fraction that can be replaced with cheaper mechanisms. Filtering collapses in the opposite direction, from a high-rank landscape of pairwise mutual relevance to a single scalar per head. When routing is properly organized, one number per head is all the filtering the model needs.

The ascending cascade is not a property of $S$--$D$ attention. It is the optimizer's natural solution when neither entanglement (standard attention) nor LN-induced equalization ($S$--$D$+LN) is present. And once measured, it is actionable. The spectral rank at each layer prescribes exactly what hybrid architectures have been searching for: which layers need full rank quadratic attention, which can be replaced with low-rank or linear mechanisms.

\section{The Routing--Filtering Decomposition}

For any attention head with query and key projections $W_Q, W_K \in \mathbb{R}^{d_{\mathrm{model}} \times d_{\mathrm{head}}}$, the pre-softmax interaction on input sequence $X \in \mathbb{R}^{N \times d_{\mathrm{model}}}$ is:
\begin{equation*}
A_h = (X W_Q W_K^{\top} X^{\top}) / \sqrt{d_h} \in \mathbb{R}^{N \times N}.
\end{equation*}
Entry $A[i,j]$ is the score that token $i$ assigns to token $j$. This matrix decomposes uniquely into two components:
\begin{align*}
F[i,j] &= (A[i,j] + A[j,i])/2 \quad &\text{symmetric: } F = F^{\top}, \\
R[i,j] &= (A[i,j] - A[j,i])/2 \quad &\text{skew-symmetric: } R = -R^{\top}.
\end{align*}

Each entry of $F$ is the average interaction between two tokens: how mutually relevant they are, regardless of direction. $F[i,j] = F[j,i]$: if token $i$ finds token $j$ relevant, token $j$ finds token $i$ equally relevant. This is an undirected similarity score. Each entry of $R$ is the directional imbalance: how much more one token attends to the other than the reverse. $R[i,j] = -R[j,i]$: any directional preference in one direction is matched by an equal and opposite preference in the other. Every attention score is the sum of a mutual part and a directional part.

The singular value decomposition of each component reveals how many independent patterns each contains, and the two components have fundamentally different spectral structure.

\paragraph{Filtering modes.} $F$ is symmetric, so its SVD pairs each singular value $\sigma_k$ with a single vector $u_k$ over token positions. The outer product $\sigma_k u_k u_k^{\top}$ adds mutual scores between every pair of tokens that participate in that pattern. For example, $\sigma_1 = 3.0$ with $u_1$ supported on three tokens produces equal mutual scores between every pair of those three tokens at gain $3.0$. This is why we call $F$ the filtering component. It selectively amplifies and suppresses relevance modes, exactly as a bank of independent gain channels does.

\paragraph{Routing modes.} $R$ is skew-symmetric, so each singular value requires two vectors: a source pattern and a sink pattern. This is the fundamental structural difference. Consider a 6-token sequence: ``The cat sat on the mat''. Suppose: $\sigma_1 = 2.0$, $u_1 = [1,0,0,0,0,0]$ (sink), $v_1 = [0,1,0,0,0,0]$ (source). The contribution $\sigma_1(u_1 v_1^{\top} - v_1 u_1^{\top})$ produces $R[0,1] = +2.0$ and $R[1,0] = -2.0$. The $+2.0$ that token 0 gains in attending to token 1 is exactly the $-2.0$ that token 1 loses in attending to token 0. Zero sum per pair. $R$ redirects attention mass without creating or destroying it. Its eigenvalues are purely imaginary; it generates rotation between modes with zero amplitude change. This is conservative transport: information moves but nothing is created or destroyed. A second flow ($\sigma_2 = 0.8$) at lower gain adds a weaker, independent directional current. Because every flow requires a source and a sink, $R$ always has even rank.

The rank of $R$ counts the number of independent rotation planes: the routing capacity. The rank of $F$ counts the number of independent gain channels: the filtering capacity. We define the routing--filtering ratio: $\rho_h = \|R_h\|_F / \|F_h\|_F$. When $\rho > 1$, the head is primarily routing. When $\rho < 1$, the head is primarily filtering. We measure the spectral complexity of each component via the effective rank: $\mathrm{effrank}(M) = (\sum_i \sigma_i) / \sigma_{\max}$, which counts the number of singular values that contribute meaningfully to the matrix. A matrix with one dominant singular value has effective rank ${\approx}1$; a matrix with $k$ equal singular values has effective rank $k$.

\section{Pretrained Model Analysis}

We decompose the attention interaction matrix of every head in GPT-2 (12 layers, 12 heads) [8], GPT-2 Medium (24 layers, 16 heads), GPT-2 Large (36 layers, 20 heads), BERT-base (12 layers, 12 heads) [9], and Pythia-410M (24 layers, 16 heads) [22]; 1{,}776 heads total. We perform two complementary measurements. The first is sequence-level: extract $A = QK^{\top}/\sqrt{d}$ on six text sequences of varying length and domain (Appendix~\ref{app:sequences}), decompose into $R + F$, and report $\rho$, $\max \mathrm{Re}(\lambda)$, and effective rank averaged across sequences. This measures what each head does on real text. The second is weight-level: form the per-head kernel $M = W_Q^{\top} W_K / \sqrt{d}$ directly from the projection matrices and decompose into its skew and symmetric parts. This measures the kernel structure independent of input.

\begin{figure}[!t]
\centering
\begin{subfigure}{0.24\linewidth}
  \centering
  \includegraphics[height=4cm, keepaspectratio]{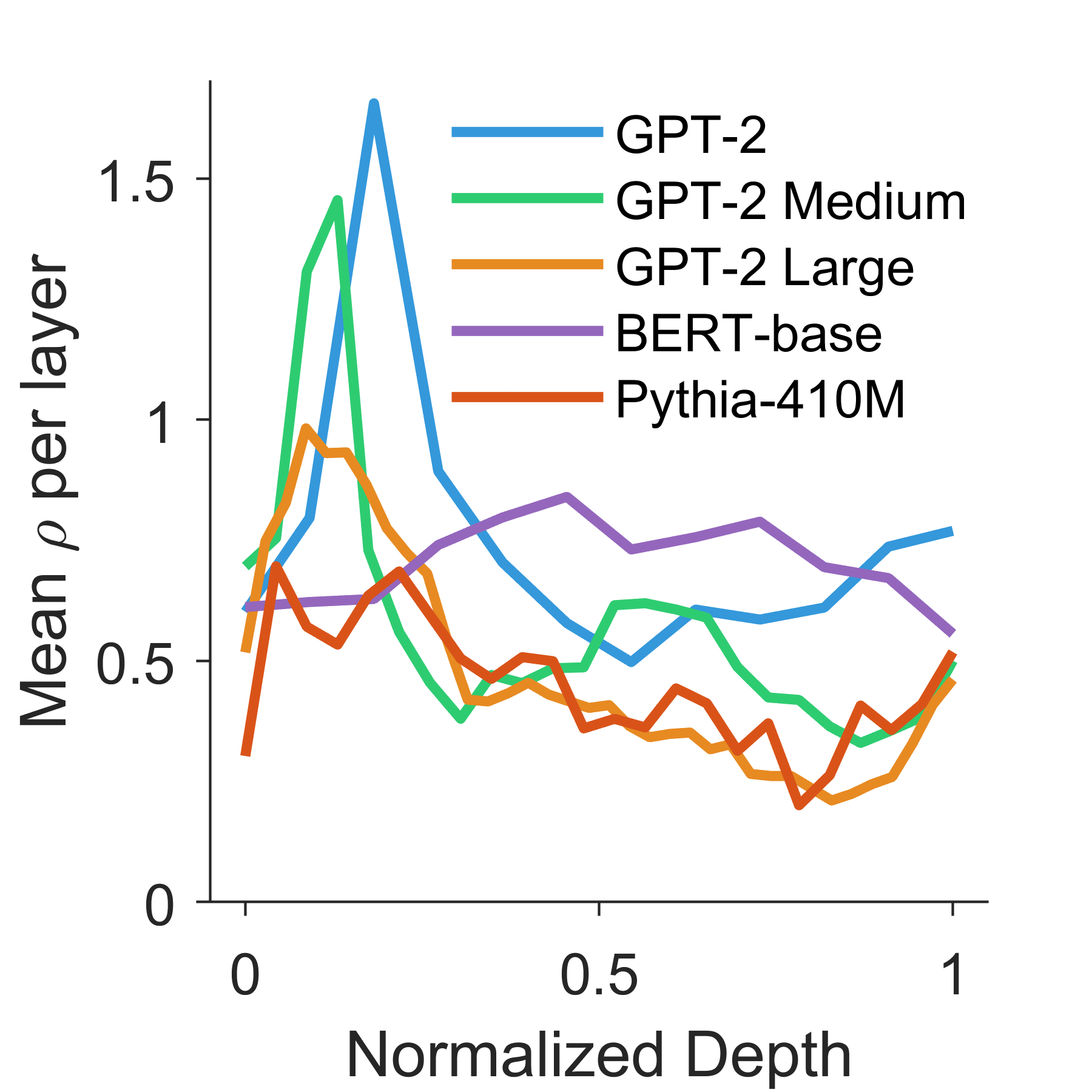}
\end{subfigure}
\hfill
\begin{subfigure}{0.24\linewidth}
  \centering
  \includegraphics[height=4cm, keepaspectratio]{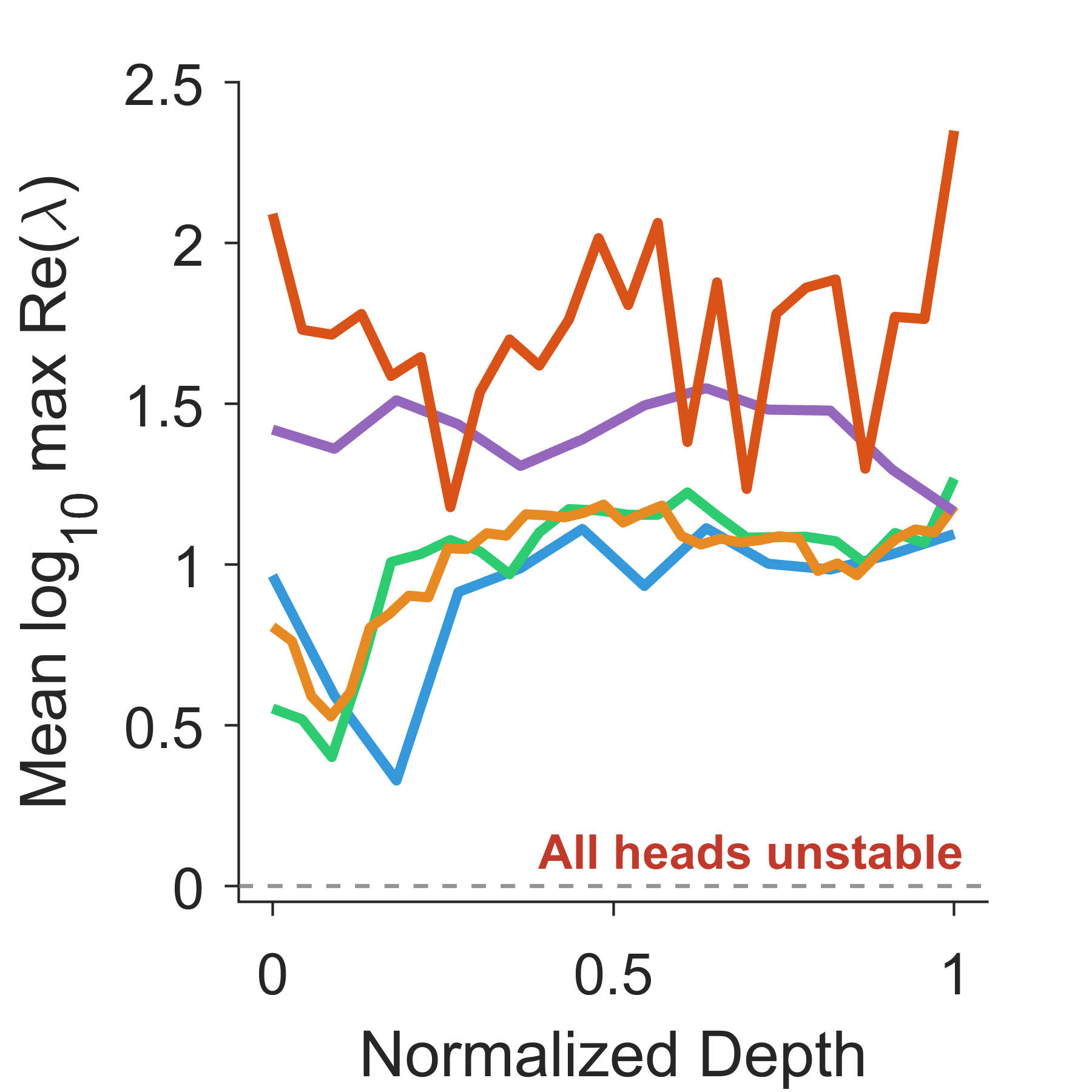}
\end{subfigure}
\hfill
\begin{subfigure}{0.24\linewidth}
  \centering
  \includegraphics[height=4cm, keepaspectratio]{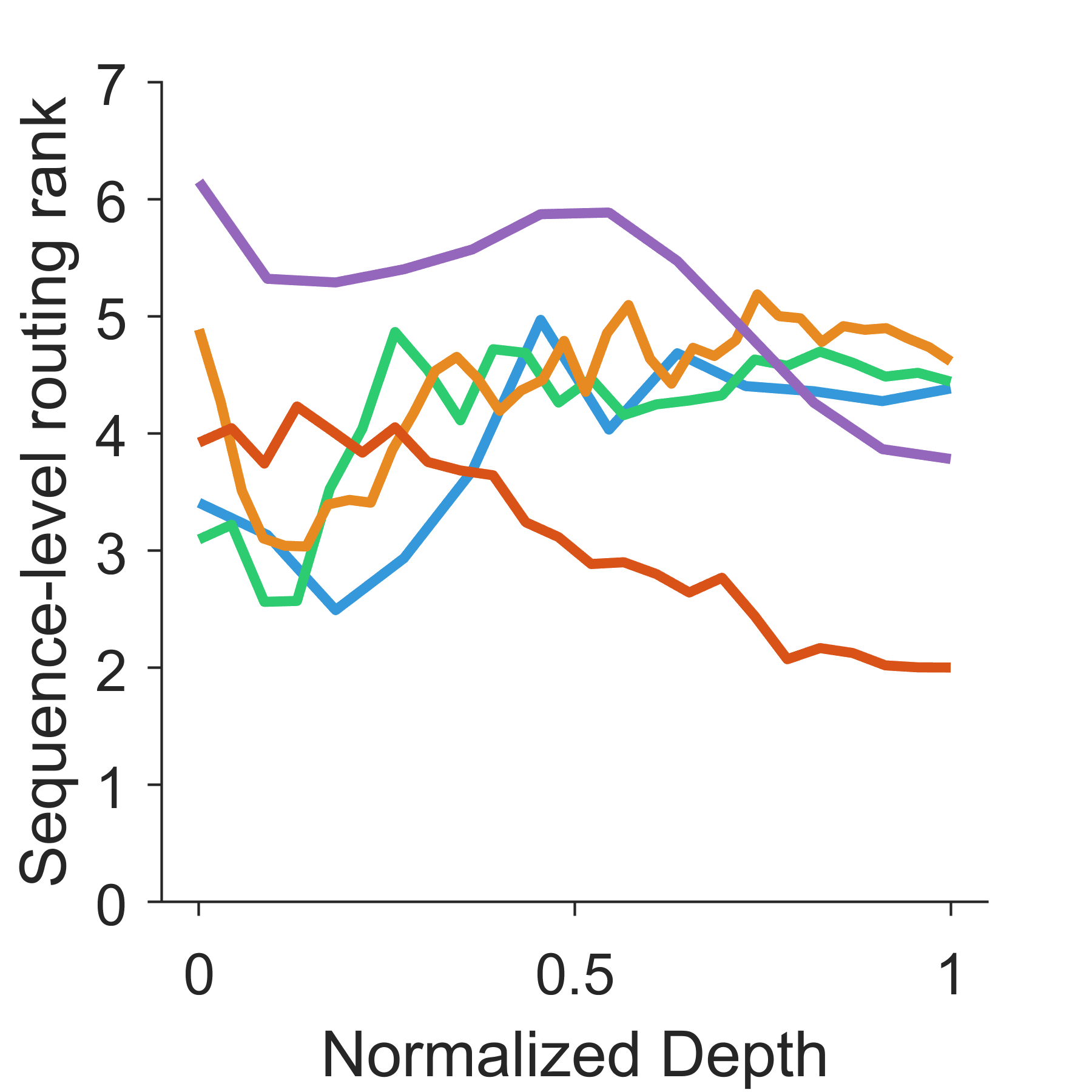}
\end{subfigure}
\hfill
\begin{subfigure}{0.24\linewidth}
  \centering
  \includegraphics[height=4cm, keepaspectratio]{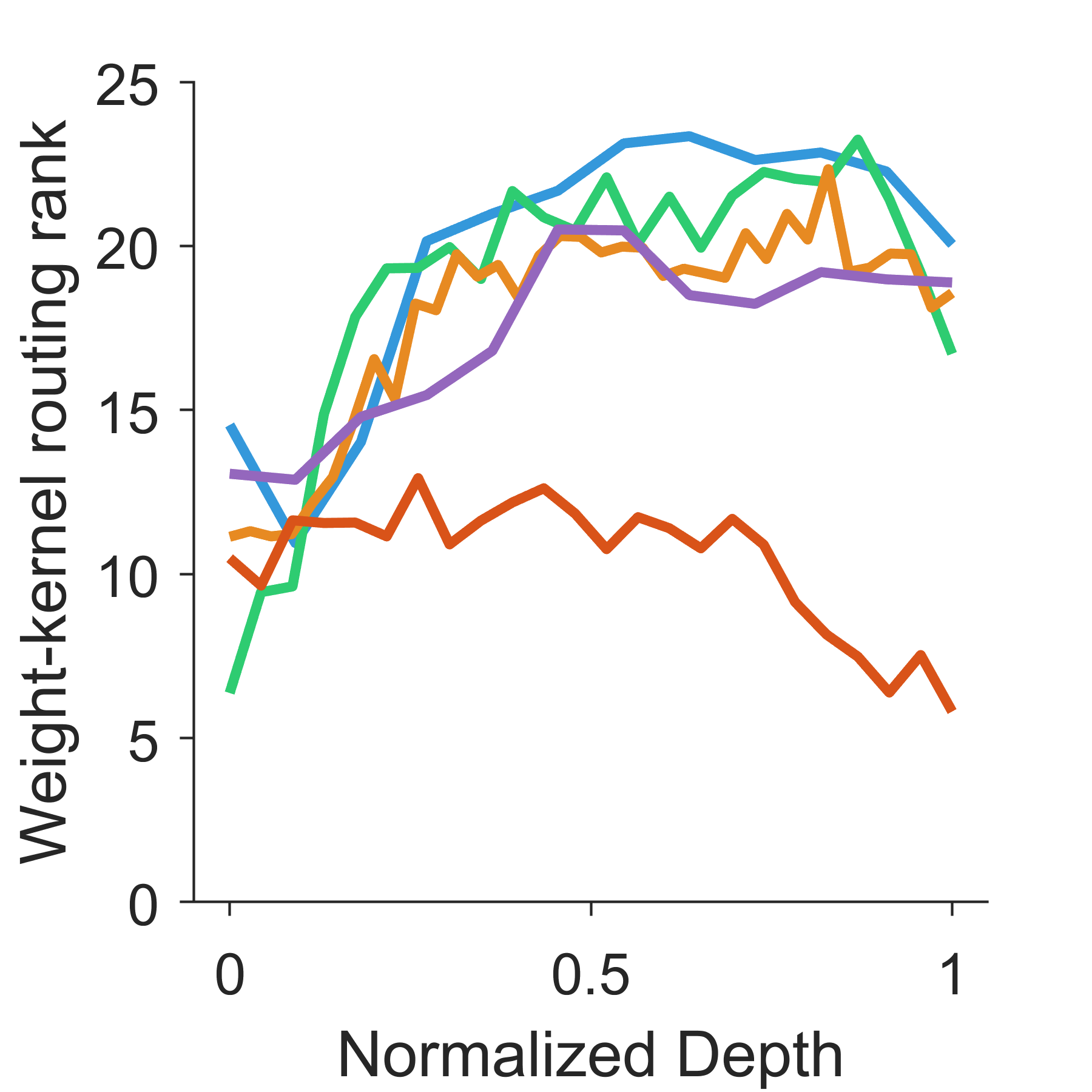}
\end{subfigure}
\caption{The routing--filtering structure of pretrained models. (a) Mean $\rho$ per layer across five models. (b) All 1{,}776 heads are dynamically unstable ($\max \mathrm{Re}(\lambda) > 0$). (c) Sequence-level effective routing rank per layer. (d) Weight-kernel effective routing rank per layer. The kernel allocates 6--23 dimensions per head, $2$--$7\times$ larger than the sequence-level rank in (c).}
\label{fig:pretrained}
\end{figure}

Three observations follow.

\paragraph{Filtering dominates routing in standard attention.} Only 48 of 720 heads in GPT-2 Large exceed $\rho = 1$; in Pythia-410M, 17 of 384 heads exceed $\rho = 1$. The median $\rho$ across GPT-2 variants is below $0.7$, in Pythia it is $0.39$, and in BERT-base it is approximately $0.7$. The GPT-2 variants and Pythia share a descending $\rho$ profile across depth with an elevation at the final layers (Fig.~\ref{fig:pretrained}a). BERT peaks in the middle layers. The shape of the $\rho$ curve distinguishes architectural families from the interaction matrix alone.

\paragraph{All heads are dynamically unstable.} Every one of the 1{,}776 heads has $\max \mathrm{Re}(\lambda) > 0$ (Fig.~\ref{fig:pretrained}b). Instability is bounded in the GPT variants and BERT, but grows with depth in Pythia-410M reaching $\max \mathrm{Re}(\lambda) \sim 10^4$. Standard attention relies heavily on layer normalization to contain this.

\paragraph{Routing is low-rank when used, higher-rank when measured at the weights.} On real text, the sequence-level effective rank of R is 2.0–5.2 across GPT-2 variants and Pythia, 3.8–6.2 in BERT (Fig.~\ref{fig:pretrained}c). Weight-kernel rank is 2–7× larger (Fig.~\ref{fig:pretrained}d), exposing a kernel–function gap. The depth profile differs across architecture and is consistent between the two measurement levels for each model. GPT-2 variants dip at sequence-level in early-middle layers and rise to a plateau, with a kernel that rises into mid-network and plateaus. BERT runs higher throughout at both levels. Pythia organizes a descending profile at both levels: sequence rank falls from $3.92$ at layer 0 to $2.00$ at layer 23, and weight-kernel rank falls from 10--13 across the first eighteen layers to 5--8 in the terminal layers. Despite different shapes, flat (GPT-2), elevated (BERT), descending (Pythia), every standard architecture organizes routing into some structured depth profile.

The findings are then: filtering carries most of the spectral budget in standard attention, routing operates well below kernel capacity, and every head is dynamically unstable. Section~\ref{sec:sd} introduces a parameterization that addresses the third by construction. Section~\ref{sec:cascade} measures the first two when filtering and routing are no longer entangled: finding 1 inverts (Section~\ref{sec:filtering-collapse}: filtering collapses to a single scalar  per head), and finding 2 develops an ascending depth structure that was invisible in standard attention (Section~\ref{sec:routing-cascade}: routing organizes into a depth-cascade).

\section{Separating Routing from Filtering: \texorpdfstring{$S$--$D$}{S-D} Attention}\label{sec:sd}

Standard attention entangles routing and filtering in a single unconstrained matrix. We separate them by construction.

\paragraph{Conservative routing ($S$).} Each head learns query and key projections $W_Q, W_K \in \mathbb{R}^{d_{\mathrm{model}} \times d_{\mathrm{head}}}$. The raw interaction $P = Q_s K_s^{\top} / \sqrt{d}$ yields the skew-symmetric part: $S = (P - P^{\top})/2$. $S$ is skew-symmetric by construction. Its eigenvalues are purely imaginary. This is the routing component: directional, conservative, zero-sum. Each singular value pair defines one rotation plane which translates to one independent source--sink flow through the sequence.

\paragraph{Dissipative filtering ($D$).} Each token $i$ computes a per-head damping coefficient:
\begin{equation*}
d_i = \mathrm{softplus}(W_d \cdot x_i + b_d) + \varepsilon,
\end{equation*}
where $W_d \in \mathbb{R}^{d_{\mathrm{model}} \times H}$ and $b_d$ is a learnable bias. The filtering component is the diagonal matrix $D = \mathrm{diag}(d_1, \ldots, d_N)$. Every entry is strictly positive which is a deliberate constraint. Standard attention allocates up to $N$ independent filtering modes per head; $S$--$D$ allocates one scalar per token per head (Section~\ref{sec:cascade}). $\varepsilon$ is a fixed offset that keeps eigenvalues bounded away from the imaginary axis during training. The constraint $\mathrm{Re}(\lambda) \le 0$ holds for any $\varepsilon \ge 0$; the offset is a conditioning parameter, not a tunable hyperparameter. Significance of $\varepsilon$ on stability is characterized in Appendix~\ref{app:epsilon}.

\begin{proposition}[Architectural stability]\label{prop:stab}
Let $L = S - D$ where $S \in \mathbb{R}^{N \times N}$ is skew-symmetric and $D \in \mathbb{R}^{N \times N}$ is a real diagonal matrix with $D_{ii} \ge 0$. Then every eigenvalue $\lambda$ of $L$ satisfies $\mathrm{Re}(\lambda) \le 0$.
\end{proposition}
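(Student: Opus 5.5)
The approach is the standard Rayleigh-quotient (energy) argument over $\mathbb{C}^N$. Let $\lambda$ be any eigenvalue of $L = S - D$, possibly complex, and let $v \in \mathbb{C}^N$ be a nonzero eigenvector, so $Lv = \lambda v$. Taking the Hermitian inner product with $v$ gives
\begin{equation*}
\lambda \, v^{*} v = v^{*} S v - v^{*} D v .
\end{equation*}
The plan is to compute the real part of each of the two terms on the right separately, then divide by $v^{*}v = \|v\|^2 > 0$.

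\textbf{The skew term.} Since $S$ is real and skew-symmetric, it is skew-Hermitian: $S^{*} = S^{\top} = -S$. Hence
\begin{equation*}
\overline{v^{*} S v} = (v^{*} S v)^{*} = v^{*} S^{*} v = -v^{*} S v,
\end{equation*}
so $v^{*} S v$ is purely imaginary and $\mathrm{Re}(v^{*} S v) = 0$. This is the precise form of the paper's remark that routing is conservative.

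\textbf{The diagonal term.} Here $v^{*} D v = \sum_i D_{ii} |v_i|^2$, which is real and, because $D_{ii} \ge 0$, nonnegative. Taking real parts then gives
\begin{equation*}
\mathrm{Re}(\lambda)\,\|v\|^2 = -\sum_i D_{ii} |v_i|^2 \le 0,
\end{equation*}
and therefore $\mathrm{Re}(\lambda) \le 0$. In other words, $\mathrm{Re}(\lambda)$ is a weighted average of the values $-D_{ii}$ with weights $|v_i|^2/\|v\|^2$.

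\textbf{Main obstacle.} There is little genuine difficulty. The one point needing care is that the eigenvector may be complex even though $L$ is real, so one must use the conjugate transpose $v^{*}$ rather than $v^{\top}$. Otherwise $v^{\top} S v = 0$ trivially and the diagonal term is no longer real, and the argument breaks down.

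I would also note the refinement that the weighted-average form gives for free: if every $D_{ii} \ge \varepsilon > 0$, then $\mathrm{Re}(\lambda) \le -\varepsilon$. This justifies the role of the offset $\varepsilon$ in keeping eigenvalues bounded away from the imaginary axis. As an alternative route, the Bendixson inequality, which bounds $\mathrm{Re}(\lambda)$ by the extreme eigenvalues of the Hermitian part $-D$, gives the same conclusion immediately.
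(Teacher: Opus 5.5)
Your proof is correct and follows essentially the same route as the paper: the Rayleigh-quotient identity $\lambda\|v\|^2 = v^{*}Sv - v^{*}Dv$, with $\mathrm{Re}(v^{*}Sv)=0$ by skew-symmetry and $v^{*}Dv \ge 0$. Your extra observations are also correct: the need for $v^{*}$ rather than $v^{\top}$, and the weighted-average refinement $\mathrm{Re}(\lambda) \le -\min_i D_{ii}$ (hence $\le -\varepsilon$). The paper's proof does not spell these out.
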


\begin{proof}
For eigenvector $v \in \mathbb{C}^N$ with eigenvalue $\lambda$: $\lambda \|v\|^2 = v^* L v = v^* S v - v^* D v$. Since $S$ is real and skew-symmetric, $v^* S v$ is purely imaginary, so $\mathrm{Re}(v^* S v) = 0$. Since $D$ is real and positive-semidefinite (diagonal with non-negative entries), $v^* D v \ge 0$. Hence $\mathrm{Re}(\lambda) \|v\|^2 = -v^* D v \le 0$.
\end{proof}

The interaction matrix $L = S - D$ is therefore dissipative. The interaction cannot amplify; it can only conserve (through $S$) or dissipate (through $D$). This is the condition that layer normalization [10] enforces approximately in standard transformers. $S$--$D$ attention enforces it exactly, from the parameterization alone. Attention weights are computed as $\mathrm{softmax}(L) \cdot V$ with the causal mask applied to $L$ before softmax. The architecture is otherwise identical to a standard transformer: residual connections, feedforward layers, weight tying. The only modification is the attention interaction. This constraint was introduced for physical dynamical systems [11, 12].

\paragraph{Training.} We train three configurations at 125M parameters ($d_{\mathrm{model}} = 768$, 12 layers, 12 heads, sequence length 1024) on OpenWebText [13] (${\sim}1$B tokens, 2 epochs): standard attention with LayerNorm (LN), $S$--$D$ with LN, and $S$--$D$ without any normalization. All use AdamW (lr=$6 \times 10^{-4}$, $\beta_1=0.9$, $\beta_2=0.95$, weight decay $0.1$) with cosine schedule (min lr $= 6 \times 10^{-5}$), 2{,}000-step warmup, and bfloat16 autocast.

All three configurations train stably (Fig.~\ref{fig:training}). The three val PPLs decompose the cost of the architectural choices: Standard at $27.87 \pm 0.01$, $S$--$D$ with LN at $31.14 \pm 0.12$ (a $3.27$ PPL gap, $+11.7\%$), $S$--$D$ without LN at $37.39 \pm 0.09$ (a further $6.25$ PPL gap, $+20.1\%$ relative to $S$--$D$+LN).

\begin{table}[!h]
\caption{Comparison of Standard, $S$--$D$+LN, and $S$--$D$ (no LN) models.}
\label{tab:training}
\centering
\begin{tabular}{lccc}
\toprule
& Standard & $S$--$D$+LN & $S$--$D$ (no LN) \\
\midrule
Model parameters & 124{,}412{,}160 & 124{,}513{,}680 & 124{,}475{,}280 \\
Max activation & $39.5$ & $36.75$ & $56.5$ \\
Final val PPL & $27.87 \pm 0.01$ & $31.14 \pm 0.12$ & $37.39 \pm 0.09$ \\
PPL gap vs Standard (\%) & --- & $+11.7\%$ & $+34.2\%$ \\
\bottomrule
\end{tabular}
\end{table}

Universal instability (finding 3) is reduced substantially: mean $\max \mathrm{Re}(\lambda)$ of per-head weight kernel M drops from $0.75$ in Standard to $0.034$ in $S$--$D$ no-LN, a $22\times$ reduction. The remaining two findings, filtering dominance and the kernel--function gap, are addressed in Section~\ref{sec:cascade}.

\section{The Spectral Cascade}\label{sec:cascade}

The $S$--$D$ model without LN is the least performant configuration we train. It is also the most informative. Its weights reveal two findings that are not present in standard attention and obscured by LN.

\subsection{Filtering collapses to a scalar}\label{sec:filtering-collapse}

We take the trained 125M $S$--$D$ model (baseline PPL $37.39$, overlapping-stride PPL $34.99$), intercept the interaction matrix $L = S - D$ at runtime, and truncate each component independently to a target rank via SVD. No retraining is done. Replacing every per-token damping $d_i^{(h)}$ with its per-head mean $\bar{d}^{(h)}$ at all 12 layers costs $+0.3$ PPL ($34.99 \to 35.29$). The entire filtering component reduces to 144 scalars (one per head, per layer). Removing routing entirely ($S = 0$) yields PPL $699.40$, indicating the mechanism is routing dominated.

\subsection{Routing organizes into a spectral cascade}\label{sec:routing-cascade}

\begin{figure}[!t]
\centering
\begin{subfigure}{0.24\linewidth}
  \centering
  \includegraphics[height=3.9cm, keepaspectratio]{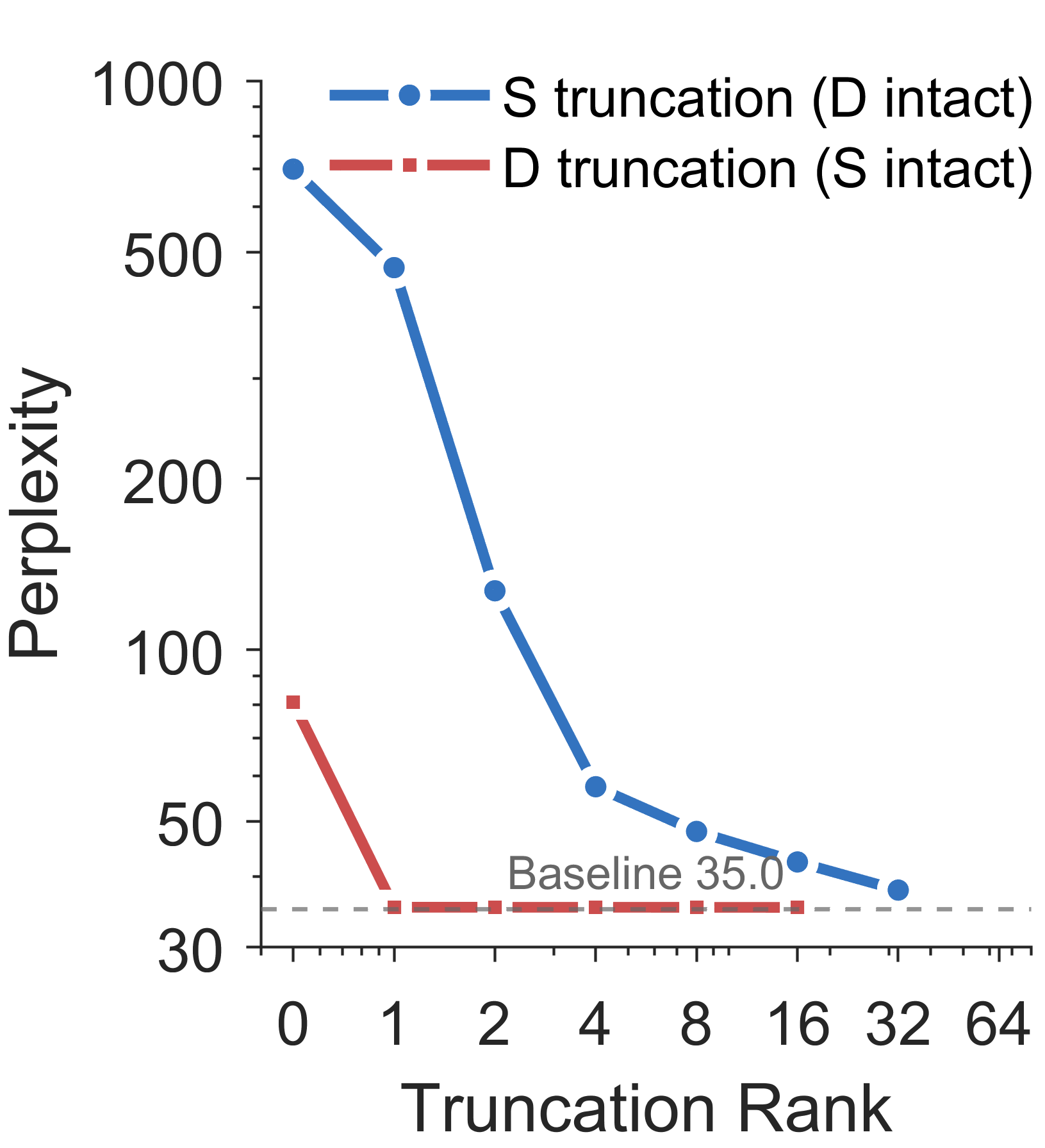}
\end{subfigure}
\hfill
\begin{subfigure}{0.24\linewidth}
  \centering
  \includegraphics[height=3.9cm, keepaspectratio]{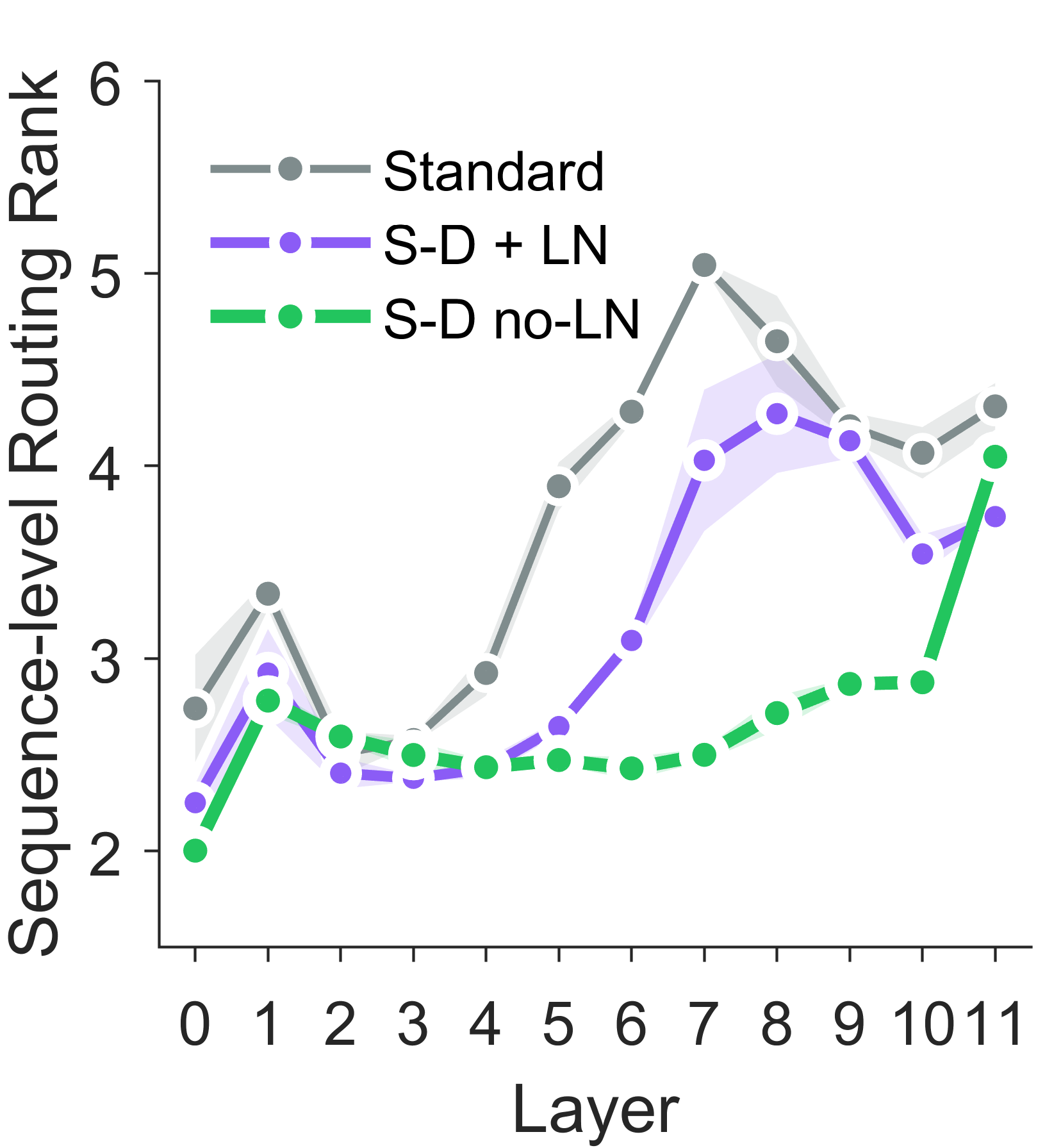}
\end{subfigure}
\hfill
\begin{subfigure}{0.24\linewidth}
  \centering
  \includegraphics[height=3.9cm, keepaspectratio]{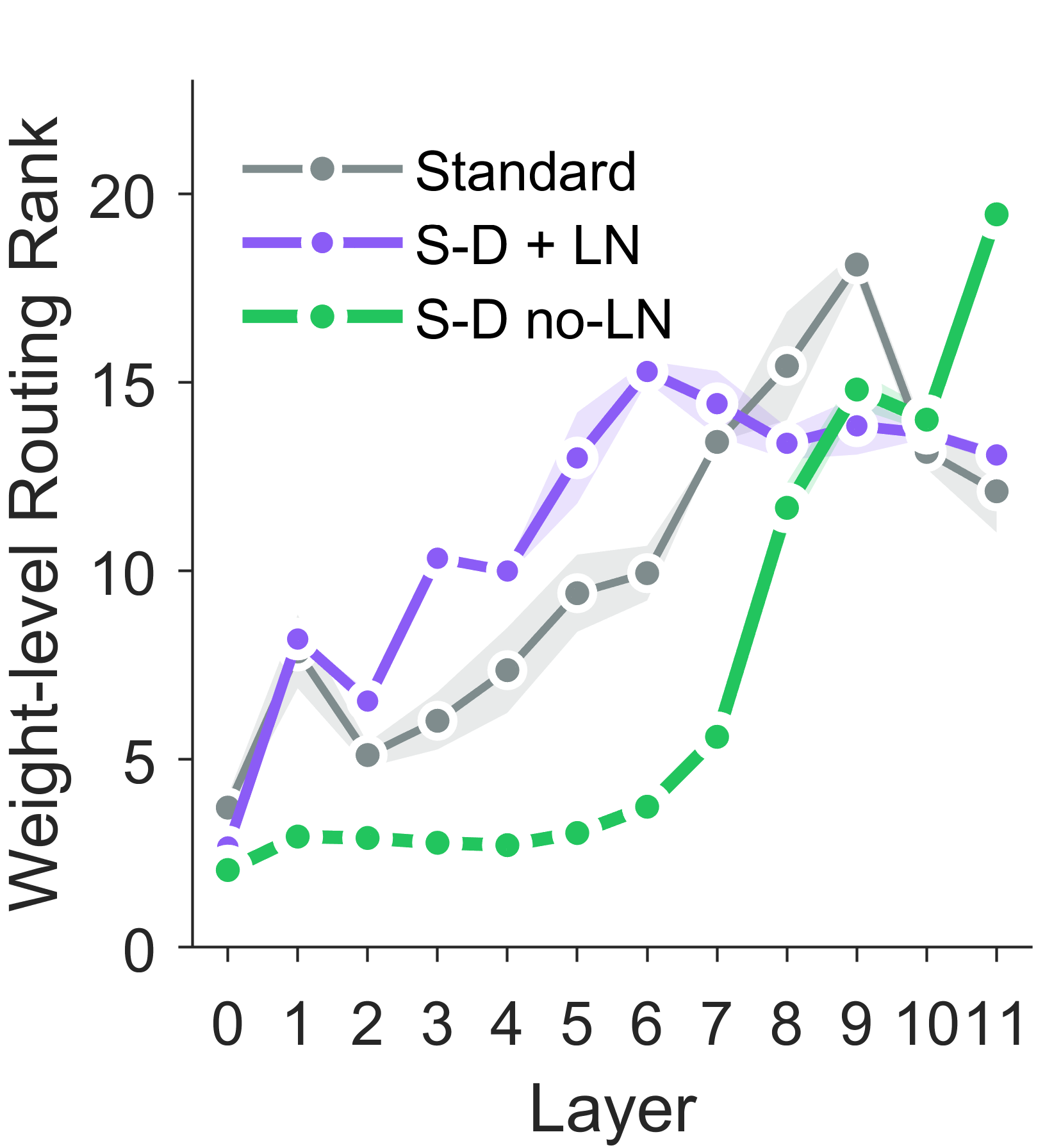}
\end{subfigure}
\hfill
\begin{subfigure}{0.24\linewidth}
  \centering
  \includegraphics[height=3.9cm, keepaspectratio]{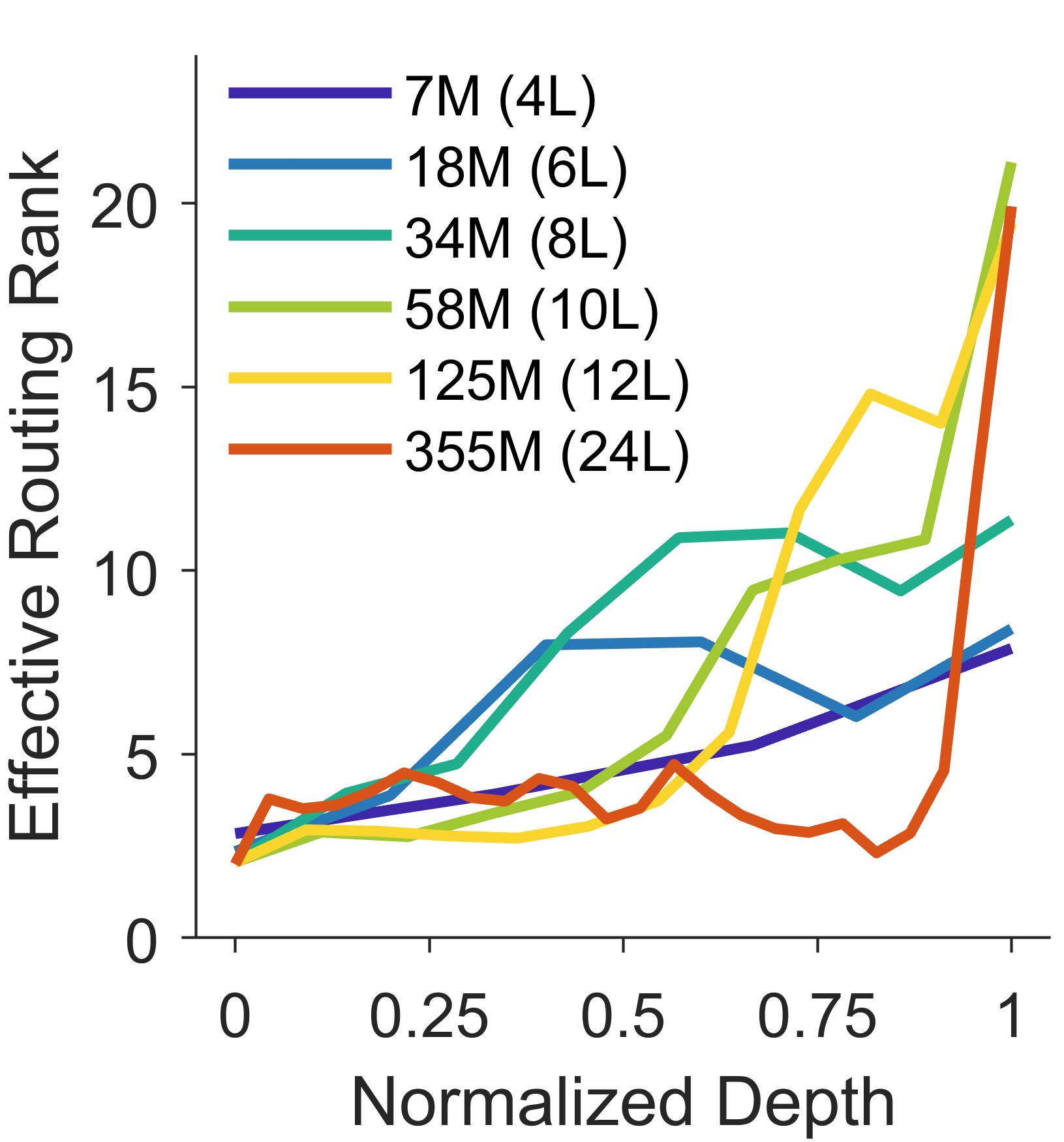}
\end{subfigure}
\caption{The spectral cascade. (a) Spectral budget of $S$--$D$ attention (125M, baseline PPL $34.99$). Truncating $D$ to rank $1$ costs $+0.3$ PPL; removing $S$ costs $20\times$. (b) Sequence-level routing rank per layer at 125M. (c) Weight-level routing rank per layer at 125M. (d) The weight-level cascade across scale. Six models from 7M to 355M parameters. Layer 0 is always rank $2$. The fraction of the network at lower rank grows with depth.}
\label{fig:cascade}
\end{figure}

With filtering collapsed, we examine what routing does with its freedom. At 125M, $S$--$D$ no-LN organizes routing into a near-monotonic depth profile (Fig.~\ref{fig:cascade}b): layer 0 collapses to effective rank $2.00 \pm 0.001$ across three random seeds, layers 1--10 form a flat plateau at rank 2.4--2.9, and layer 11 spikes to $4.05 \pm 0.04$. Layer 0 learns a single rotation plane; one source--sink flow that broadly redistributes information across the sequence. Each successive layer adds rotation planes. Layer 11 operates with several independent directional flows. Standard attention at 125M shows sequence-level rank rises from $2.74$ at layer 0 to a peak of $5.04$ at layer 7, then falls to $4.31$ at the terminal layer. $S$--$D$ with LN produces the same qualitative shape across two seeds, peaking at layers 8--9 (4.1--4.5) and falling to $3.7$ at layer 11 (Fig.~\ref{fig:cascade}b). $S$--$D$ without LN is the only configuration where routing rank reaches its maximum at the terminal layer.

Weight-level effective rank at 125M shows the identical depth profile, scaled up: $S$--$D$ no-LN rises from $2.05$ at layer 0 to $19.45$ at layer 11 (Fig.~\ref{fig:cascade}c); Standard organizes a mid-network bulge peaking at layer 9 (rank ${\sim}18$) and falling to $12$ at the terminal layer; $S$--$D$+LN bulges around layer 6 (rank ${\sim}15$) and falls to $13$ at layer 11. The kernel allocates up to $5\times$ more capacity than the function exercises, but the shape of the depth allocation is what each architecture has organized.

We train $S$--$D$ models without LN at six scales: 7M (4 layers), 18M (6 layers), 34M (8 layers), 58M (10 layers), 125M (12 layers), and 355M (24 layers). Three regularities hold across all six scales (Fig.~\ref{fig:cascade}d). First, layer 0 effective rank converges toward $2$ with scale: $2.84$ at 7M, $2.06$ at 58M, and $2.00$ at 355M. Second, the terminal layer reaches rank 19--21 at $d_{\mathrm{head}} = 64$ and rank $\sim8$ at $d_{\mathrm{head}} = 32$, indicating the terminal rank is set by head dimension, not by depth. Third, the fraction of layers operating at rank $\le 5$ grows monotonically with depth: $33\%$ at 18M, $58\%$ at 125M, and $92\%$ at 355M. The plateau widens with depth; the high-rank work concentrates at the top.
The cascade is the optimizer's natural solution when neither entanglement nor LN obstructs depth asymmetry; Appendix D documents its training-time emergence.

\section{Spectral Surgery}

The spectral cascade predicts that the cost of simplifying a layer should follow its routing rank: cheap at the bottom, expensive at the top. We test this by surgically linearizing individual layers (forcing rank-2 routing and scalar filtering) and measuring perplexity without retraining. All surgery experiments use overlapping-stride PPL on OWT validation: Standard 125M = 29.6, S-D no-LN 125M = 34.99 (the latter consistent with section 5.1).

\subsection{Per-layer linearization cost}\label{sec:perlayer-surgery}

In $S$--$D$ attention (Fig.~\ref{fig:surgery}a), layers 0--5 can each be linearized at less than $0.1\%$ perplexity cost. The cost then rises monotonically with the cascade: $+1.2\%$ at layer 6, $+2.0\%$ at layer 7, $+3.7\%$ at layer 8, $+6.7\%$ at layer 9, $+11.6\%$ at layer 10, $+55.6\%$ at layer 11. Each layer's linearization cost correlates with its effective routing rank. The cascade is readable from the weights and predictive of function. In standard attention, the same linearization costs $0.1$--$13\%$ across layers with no consistent pattern. The peak cost ($12.9\%$) occurs at layer 7 not at layer 11 as the cascade in $S$--$D$ would predict; layers 10 and 11 cost only 1.5--1.7\% each. This is consistent with the mid-network bulge structure Section~\ref{sec:routing-cascade} reports for Standard at 125M: high-rank work concentrates around layer 7, so linearizing layer 7 is expensive while linearizing the terminal layers is comparatively cheap.

\subsection{Cumulative linearization}

Starting from the front of the $S$--$D$ model, layers 0 through 6 ($58\%$ of the network) can be simultaneously linearized within $5\%$ of baseline perplexity (Fig.~\ref{fig:surgery}b). The cost stays under $0.2\%$ through layer 5, jumps to $2.8\%$ at layer 6, then crosses $5\%$ at layer 7. Beyond layer 7, cumulative cost rises sharply: $+32\%$ at layer 8, $+82\%$ at layer 9, $+142\%$ at layer 10, $+270\%$ at full linearization. In standard attention, cumulative front-to-back linearization collapses by layer 3: perplexity jumps from $29.6$ to $41.7$ once the third layer is folded in ($+41\%$ above baseline). By layer 5 it nearly doubles ($50.7$), by layer 7 it triples ($101.7$), reaching $168.4$ at full linearization ($+469\%$).

\begin{figure}[!t]
\vspace{-6pt}     
\centering
\begin{subfigure}{0.4\linewidth}
  \centering
  \includegraphics[height=4.5cm, keepaspectratio]{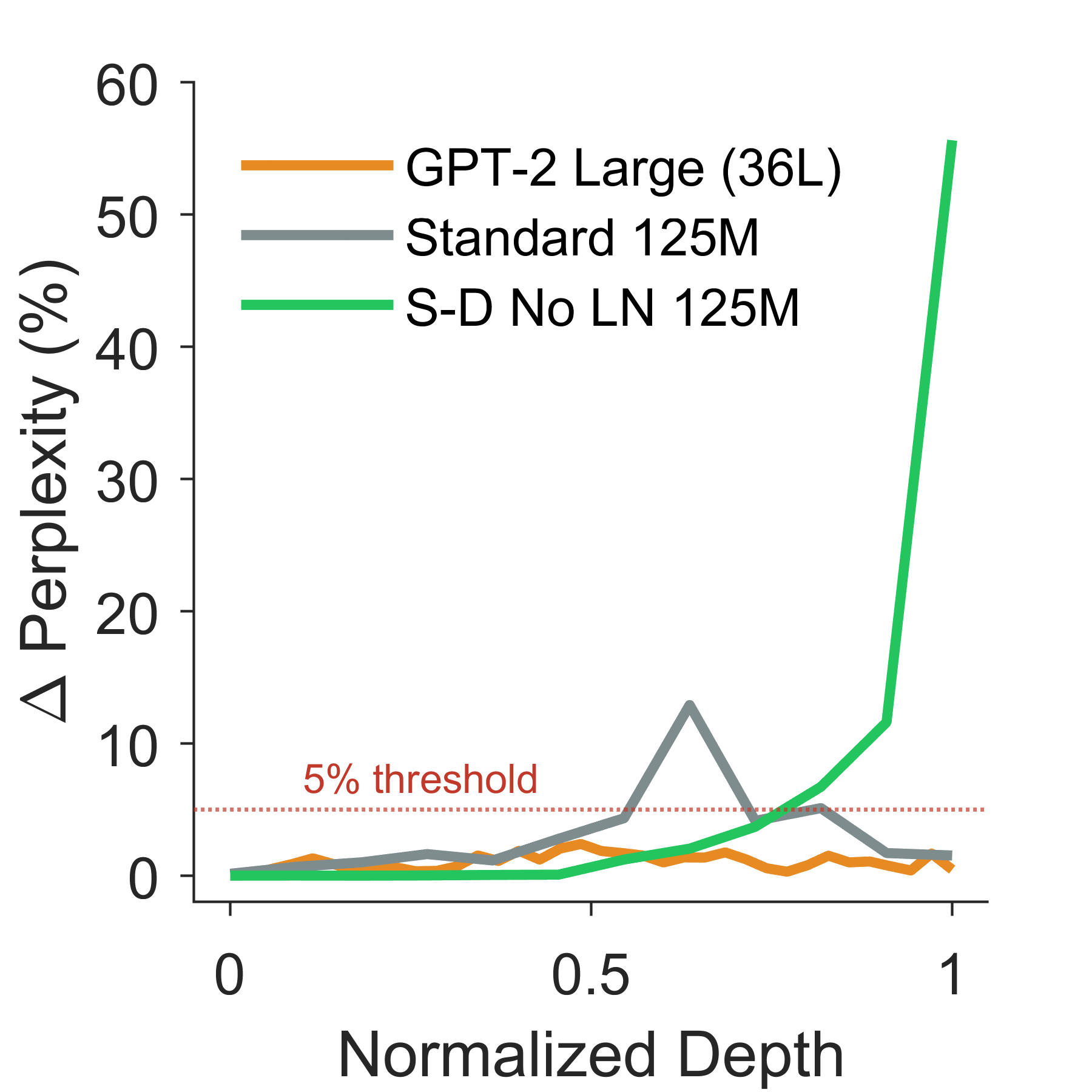}
\end{subfigure}
\hfill
\begin{subfigure}{0.4\linewidth}
  \centering
  \includegraphics[height=4.5cm, keepaspectratio]{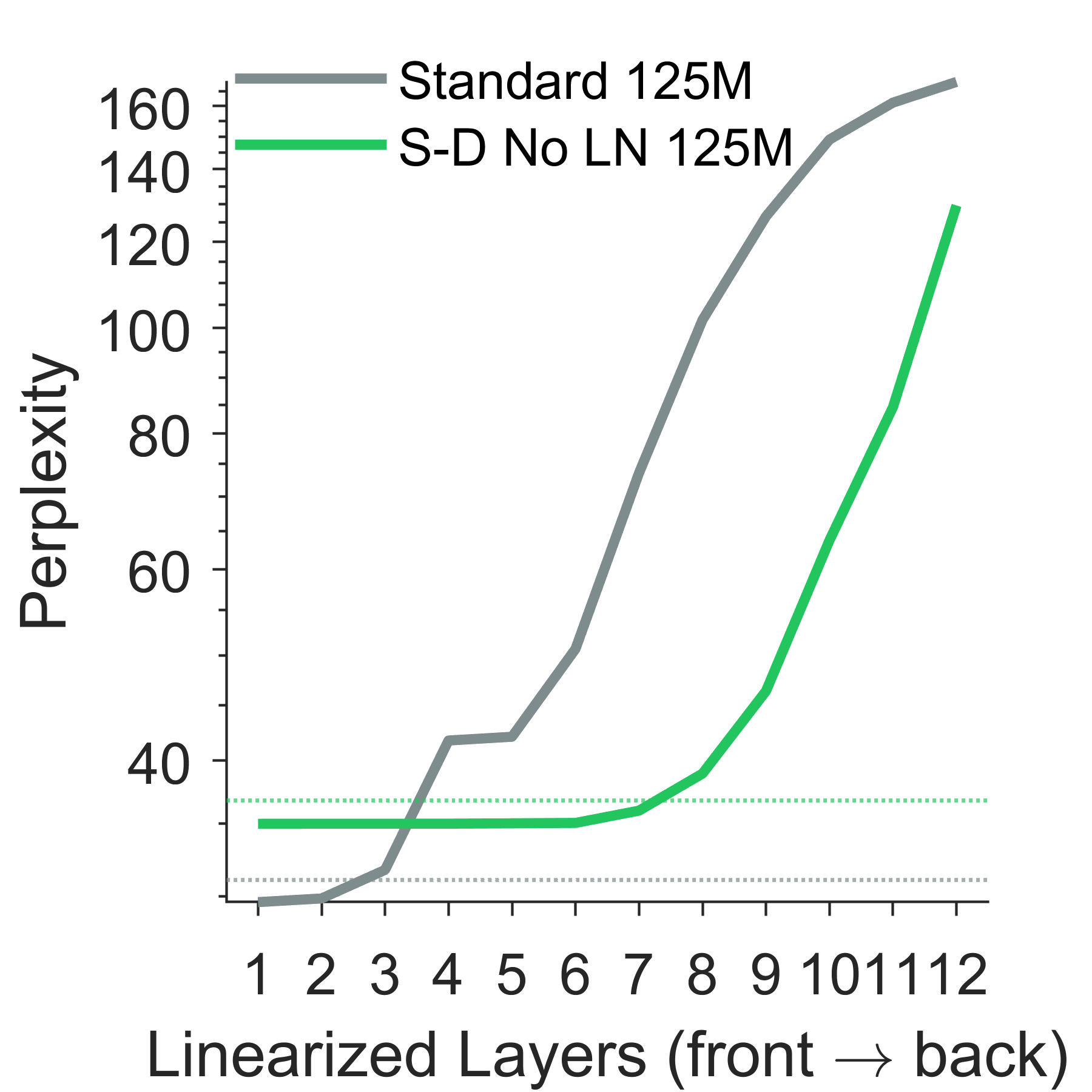}
\end{subfigure}
\caption{Linearization cost follows the cascade. (a) Per-layer $\Delta$PPL\% after forcing rank-2 routing and scalar filtering. $S$--$D$ rises with the cascade. (b) Cumulative linearization. $S$--$D$ front-to-back: 7 of 12 layers linearized within $5\%$ of baseline. Standard front-to-back collapses by layer 3.}
\vspace{-8pt}     
\label{fig:surgery}
\end{figure}

\subsection{The inversion}

The spectral surgery applied to $S$--$D$ attention and standard attention produces opposite results across many metrics. In standard attention (Section~3, finding 1), filtering truncation is prohibitive and routing truncation is cheap, consistent with filtering dominance. In $S$--$D$ attention, routing truncation is prohibitive and filtering truncation is free; consistent with routing dominance and the scalar collapse of $D$.

This inversion confirms that the structure revealed by $S$--$D$ attention is architectural, not coincidental. The decomposition does not impose a routing-dominated structure, but permits one. And the structure that emerges is the opposite of what standard attention exhibits. The full spectral budget of GPT-2 Large (independent rank sweeps of routing and filtering, and the joint truncation grid) is reported in Appendix~\ref{app:budget}.

\begin{table}[!h]
\caption{Inversion summary. GPT-2 Large vs $S$--$D$ 125M.}
\label{tab:inversion}
\centering
\begin{tabular}{lcc}
\toprule
Surgery & GPT-2 Large & $S$--$D$ \\
\midrule
Remove all routing & PPL $985$ & PPL $699$ \\
Remove all filtering & PPL $12{,}034$ & PPL $81.0$ \\
Routing to rank $4$ & $+55\%$ & $+64\%$ \\
Filtering to rank $4$ & $+3{,}445\%$ & $+0.9\%$ \\
\bottomrule
\end{tabular}
\end{table}
The inversion lives in the filtering, rank-4 compression costs $+3,445\%$ in GPT-2 Large vs $+0.9\%$ in S–D revealing a 4,000× asymmetry.

\section{Spectrum-Guided Architecture}

The cascade prescribes architecture: which layers need how many dimensions, which can be linearized, and at what head dimension. We test two operationalizations: head-dim narrowing within softmax attention (Section~\ref{sec:param-realloc}) and replacing softmax with linear attention in early layers (Section~\ref{sec:compute-realloc}).

\subsection{Parameter reallocation}\label{sec:param-realloc}

The cascade prescribes per-layer head dimensions. The rank $\le 5$ plateau identified in Section~\ref{sec:routing-cascade} motivates $d_{\mathrm{head}}$ as small as $8$ in early layers; the high-rank tail at layers 8--11 needs full $d_{\mathrm{head}} = 64$. We instantiate this prescription by varying $d_{\mathrm{head}}$ per layer to match the measured spectral rank. This is standard $QK^{\top}$ attention compatible with FlashAttention [15], requiring no architectural modification beyond varying the projection dimension per layer. The $S$--$D$ model provides the diagnostic while standard attention is the deployment target. We train three architectures at this allocation and compare to the 125M Standard baseline (val PPL $27.87 \pm 0.01$).

\begin{table}[!h]
\caption{Cascade architecture configuration.}
\label{tab:cascade-arch}
\centering
\begin{tabular}{lccccc}
\toprule
Config & Total Params & Attn Params & Attn Savings & Val PPL & vs Standard \\
\midrule
Standard & 124M & $28.3$M & --- & $27.87 \pm 0.01$ & --- \\
Compressed & 106M & $10.0$M & $65\%$ & $30.20 \pm 0.06$ & $+8.4\%$ \\
Wide & 108M & $12.2$M & $57\%$ & $29.32 \pm 0.02$ & $+5.2\%$ \\
Deep & 125M & $15.0$M & $47\%$ & $28.97 \pm 0.08$ & $+3.9\%$ \\
\bottomrule
\end{tabular}
\end{table}

\paragraph{Compressed.} The diagnostic configuration: 12 layers, 12 heads per layer, $d_{\mathrm{head}}$ matched directly to the cascade $8/8/8/8/8/8/16/16/32/32/64/64$ across layers 0--11. Matching $d_{\mathrm{head}}$ to the spectral profile reduces attention parameters from $28.3$M to $10.0$M and total model parameters from $124$M to $106$M. Val perplexity is $30.20 \pm 0.06$. The $+8.4\%$ PPL gap measures how much the attention parameters in the standard uniform $d_{\mathrm{head}}=64$ architecture were doing useful work that pure cascade narrowing eliminates.

\paragraph{Wide.} Reinvest the saved budget as more heads in the cascade plateau. Layers 0--5 use $24$ heads $\times$ $d_{\mathrm{head}} = 8$ (double the head count, same per-head dimension), layers 6--7 use $16 \times d_{\mathrm{head}} = 16$, layers 8--9 use $12 \times d_{\mathrm{head}} = 32$, layers 10--11 use $12 \times d_{\mathrm{head}} = 64$. $108$M total parameters, $12.2$M attention ($-57\%$). Val PPL $29.32 \pm 0.02$. Each early-layer head captures one rotation plane; doubling the head count doubles routing capacity without restoring head dimensions the cascade says are unnecessary.

\paragraph{Deep.} Reinvest the saved budget in depth. $15$ layers with $d_{\mathrm{head}}$ allocation $8/8/8/8/8/8/8/16/16/32/32/64/64/64/64$. $125$M total parameters, $15.0$M attention ($-47\%$), parameter-matched to Standard. Val PPL $28.97 \pm 0.08$. Deep matched lands the closest of the three to baseline. Each additional cascade-compressed layer costs roughly an eighth of what a uniform-$d_{\mathrm{head}}=64$ layer costs in attention parameters while adding a full residual processing step.

The Compressed configuration is the diagnostic: spectral measurements from $S$--$D$ identify real redundancy in standard attention. Wide and Deep show two reinvestment strategies. Width is the more parameter-efficient while depth lands closest to baseline.

\subsection{Compute reallocation}\label{sec:compute-realloc}

A separate question from how many parameters each layer needs is whether early layers need $O(N^2)$ computation at all. If a layer uses only rank 2--3, the attention pattern it computes is low-dimensional enough that a feature-map approximation, $\phi(Q) \phi(K)^{\top}$ with $\phi(x) = \mathrm{elu}(x) + 1$ [16], computed via causal cumulative sums in $O(N)$, may be exact rather than approximate. Linear attention with simple feature maps [16] has been largely abandoned [19] on the assumption that the failure is intrinsic to the feature map. The cascade in Section~\ref{sec:routing-cascade} says otherwise: in the rank-2/3 plateau, the routing demand is low enough that the feature map's expressive ceiling is not the binding constraint. We test this directly. Replace the first $K$ layers with causal linear attention while keeping the remaining layers as softmax.

\begin{table}[!h]
\caption{Linear attention boundary sweep. First $k$ layers use ELU+1 causal linear attention ($O(N)$); remaining layers use softmax attention ($O(N^2)$). Cascade assigns $d_{\mathrm{head}}$ per layer matched to the measured spectral rank: $8$ for layers 0--5, $16$ for layers 6--7, $32$ for layers 8--9, $64$ for layers 10--11. Uniform is $d_{\mathrm{head}} = 64$ everywhere. Attention FLOPS counts projections and attention core at $2$ ops per multiply-add, summed across all $12$ layers per token at $N=1024$.}
\label{tab:linear-sweep}
\centering
\begin{tabular}{lcccccc}
\toprule
Config  & Attn Savings & Attn FLOPS Savings & Val PPL & vs Standard \\
\midrule
Standard & --- & --- & $27.87 \pm 0.01$ & --- \\
$K{=}4$ uniform & $0\%$ & $7.8\%$ & $28.25 \pm 0.09$ & $+1.4\%$ \\
$K{=}4$ cascade  & $64.6\%$ & $65.6\%$ & $30.38 \pm 0.03$ & $+9.0\%$ \\
$K{=}7$ uniform & $0\%$ & $13.7\%$ & $29.83$ & $+7.0\%$ \\
$K{=}7$ cascade  & $64.6\%$ & $66.6\%$ & $31.43 \pm 0.25$ & $+12.8\%$ \\
$K{=}10$ uniform & $0\%$ & $19.5\%$ & $31.15$ & $+11.8\%$ \\
$K{=}10$ cascade  & $64.6\%$ & $69.2\%$ & $33.15 \pm 0.11$ & $+18.9\%$ \\
\bottomrule
\end{tabular}
\end{table}

$K=4$ uniform reaches val PPL $28.25 \pm 0.09$ across two seeds, within $1.4\%$ of the all-quadratic baseline. ELU+1 linear attention works at full $d_{\mathrm{head}} = 64$ when applied to the layers the cascade identifies as rank 2--3. Extending linearization beyond the rank-2/3 plateau under uniform $d_{\mathrm{head}}=64$ degrades smoothly with the cumulative routing rank: $K=7$ costs $+7.0\%$ and $K=10$ costs $+11.8\%$ vs. baseline. The uniform sweep isolates the linearization cost itself and confirms it tracks the cascade. $K=4$ cascade narrows $d_{\mathrm{head}}$ to $8$ in the linear plateau, matching the spectral rank measurement. This costs an additional $+7.6\%$ PPL relative to $K=4$ uniform. Linearization and head-dim narrowing are separable design choices: linearizing cleanly is what the cascade prescribes, and head-dim narrowing within those layers trades parameters for perplexity on top. $K=4$ cascade reaches the same range as Compressed in Section~\ref{sec:param-realloc} (PPL $30.38$ vs $30.20$) at comparable attention parameters ($10.0$M each), but with $65.6\%$ fewer attention FLOPs. The cumulative cost rises monotonically with the cumulative routing rank of the linearized layers, both with and without head-dim narrowing. This is the Section~\ref{sec:perlayer-surgery} surgery result reproduced in models trained from scratch.

\begin{figure}[!t]
\centering
\begin{subfigure}{0.45\linewidth}
  \centering
  \includegraphics[height=4.5cm, keepaspectratio]{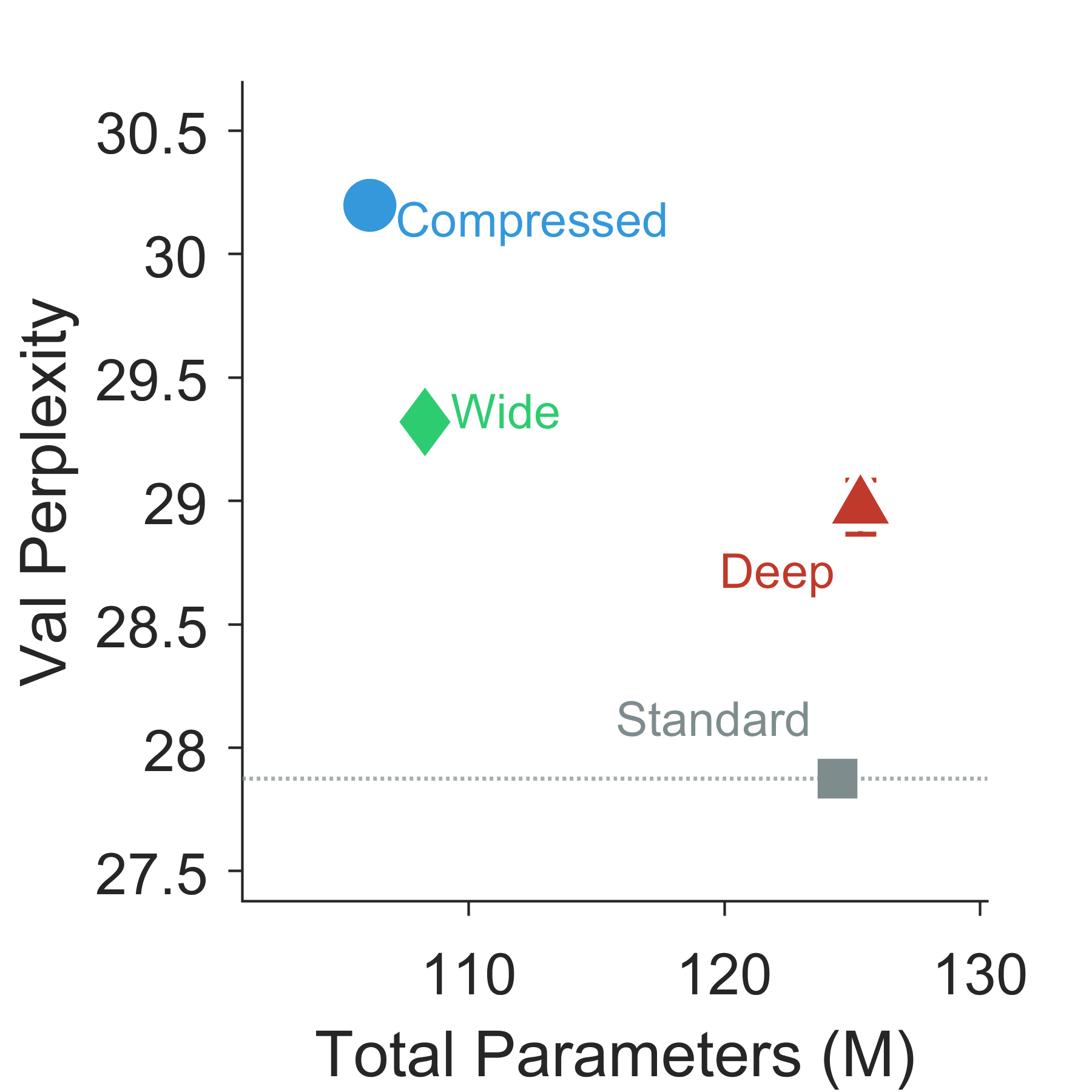}
\end{subfigure}
\hfill
\begin{subfigure}{0.45\linewidth}
  \centering
  \includegraphics[height=4.5cm, keepaspectratio]{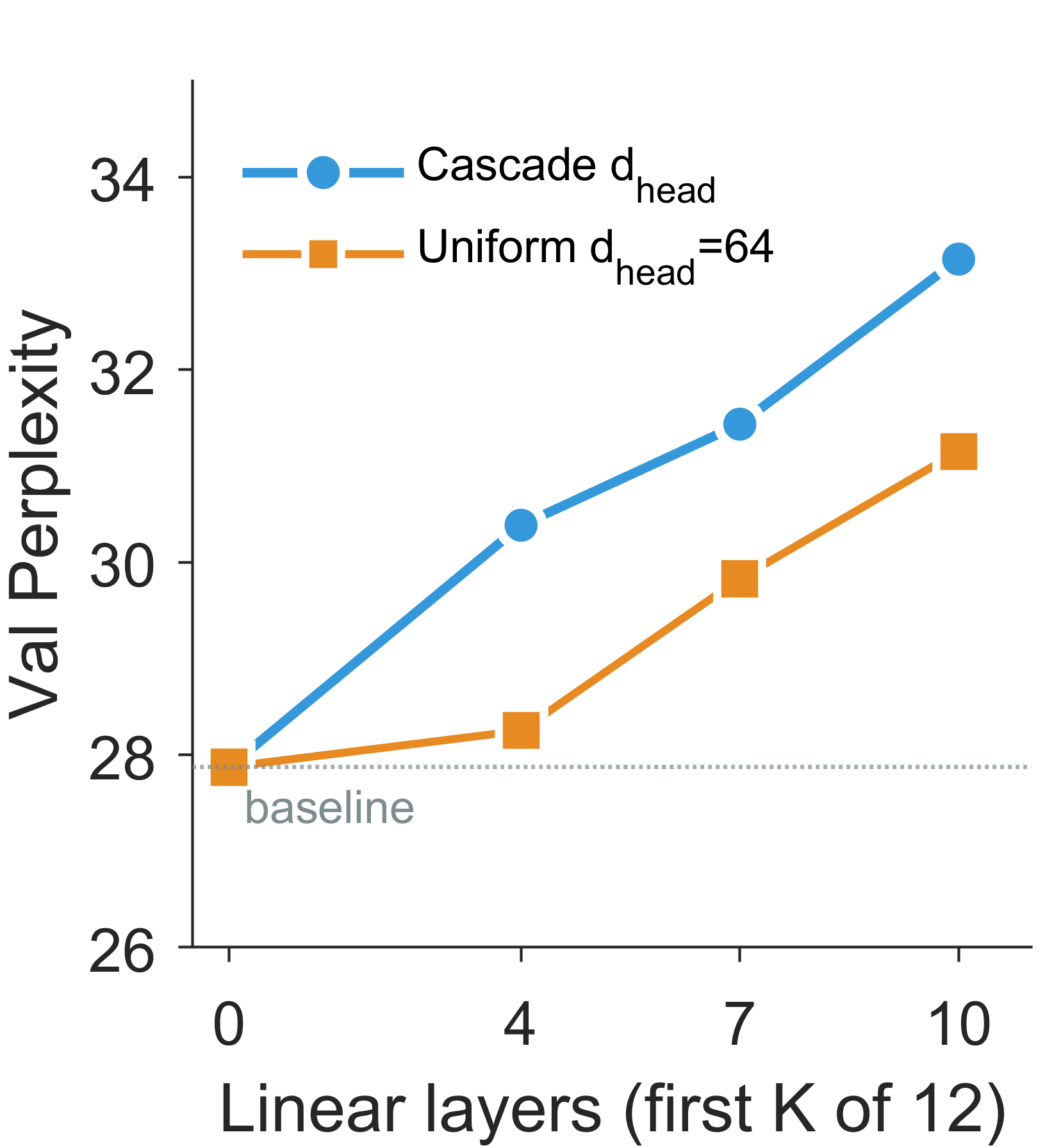}
\end{subfigure}
\caption{Spectrum-guided architecture. (a) Val perplexity vs.\ total parameters across four configurations. (b) Val perplexity vs.\ number of linearized layers (first $k$ of $12$ replaced with ELU+1 causal linear attention).}
\label{fig:arch}
\end{figure}

\section{Discussion}\label{sec:disc}

The contribution of this paper is the measurement, not the architecture. Every transformer has a spectral budget, a per-layer allocation of routing capacity, that standard attention hides by entangling routing and filtering. $S$--$D$ reveals it by separating them. The methodology: train a small $S$--$D$ diagnostic at the target depth, read the rank profile, prescribe accordingly. The cascade shape is not scale-invariant as the low-rank fraction grows from $33\%$ at 6 layers to $92\%$ at 24 layers. What is invariant is the direction: layer 0 converges toward rank $2$, terminal rank is set by head dimension, and the linearizable region widens with depth. A diagnostic at the target depth identifies where the boundary sits for that configuration. Validation at scale is the most important open experiment.

Hybrid architectures discover their attention/recurrence ratios by sweep: Nemotron-H lands at ${\sim}8\%$ full attention [2], Jamba at $1{:}7$ [3], Griffin and Zamba at different splits [4, 5]. Each works at its scale, none transfer cleanly. The cascade provides a continuous measurable rank that prescribes which layers need quadratic attention. At 355M, the cascade places $92\%$ of layers at rank $\le 5$, comparable to the empirical hybrid ratios.

Simple linear attention has been largely abandoned [19] on the assumption that the failure is intrinsic to the feature map. Our $K=4$ uniform result ($+1.4\%$ PPL at full $d_{\mathrm{head}}=64$) says otherwise: ELU+1 works in the layers the cascade identifies as rank 2--3, and degrades smoothly when extended into higher-rank layers ($+7.0\%$ at $K=7$, $+11.8\%$ at $K=10$). The failure was misallocation, not capacity. SSMs [7, 17] and gated recurrences [18, 20, 21] may extend the viable boundary further; the starting point for comparison should be the spectral rank of each layer, not the uniform assumption.

\paragraph{Limitations.} All experiments are done up to 355M on two epochs of OpenWebText. The cascade may shift quantitatively at larger scale. The $d_{\mathrm{head}}$ schedule derived from $S$--$D$ may not be exact for standard attention at every configuration. $S$--$D$ cannot use FlashAttention because it materializes the full $N \times N$ matrix which is acceptable for a diagnostic, not for deployment.

\begin{ack}
This work was partially supported by the Department of Energy Grant No.\ DE-SC0022248, Office of Naval Research Grant No.\ N00014-21-1-2634, and Air Force Office of Scientific Research Grants No.\ FA9550-21-1-0305 and FA9550-22-1-0433.
\end{ack}

\section*{References}

\medskip
{\small

[1] A.\ Vaswani, N.\ Shazeer, N.\ Parmar, J.\ Uszkoreit, L.\ Jones, A.\ N.\ Gomez, L.\ Kaiser, and I.\ Polosukhin.\ ``Attention is all you need,'' in \textit{Advances in Neural Information Processing Systems 30}, 2017.

[2] NVIDIA.\ ``Nemotron-H: A family of accurate and efficient hybrid Mamba-Transformer models,'' arXiv preprint arXiv:2504.03624, 2025.

[3] O. Lieber, \ B. Lenz, \ H. Bata, et al.\ ``Jamba: A hybrid transformer-Mamba language model,'' arXiv preprint arXiv:2403.19887, 2024.

[4] S.\ De, S.\ L.\ Smith, A.\ Fernando, A.\ Botev, et al.\ ``Griffin: Mixing gated linear recurrences with local attention for efficient language models,'' arXiv preprint arXiv:2402.19427, 2024.

[5] P. Glorioso, \ Q. Anthony, \ Y. Tokpanov, et al. \ ``Zamba: A compact 7B SSM hybrid model,'' arXiv preprint, arXiv:2405.16712, 2024.

[6] A.\ Botev, S.\ De, S.\ L.\ Smith, A.\ Fernando, et al.\ ``RecurrentGemma: Moving past transformers for efficient open language models,'' arXiv preprint, arXiv:2404.07839, 2024.

[7] A.\ Gu and T.\ Dao.\ ``Mamba: Linear-time sequence modeling with selective state spaces,'' arXiv preprint arXiv:2312.00752, 2023.

[8] A.\ Radford, J.\ Wu, R.\ Child, D.\ Luan, D.\ Amodei, and I.\ Sutskever.\ ``Language models are unsupervised multitask learners,'' OpenAI blog, 2019.

[9] J.\ Devlin, M.-W.\ Chang, K.\ Lee, and K.\ Toutanova.\ ``BERT: Pre-training of deep bidirectional transformers for language understanding,'' in \textit{Proceedings of NAACL-HLT}, 2019.

[10] J.\ L.\ Ba, J.\ R.\ Kiros, and G.\ E.\ Hinton.\ ``Layer normalization,'' arXiv preprint, arXiv:1607.06450, 2016.

[11] S.\ Jamil, and R. \ Kapadia.\ ``Interpretable Physics Extraction from Data for Linear Dynamical Systems using Lie Generator Networks,'' arXiv preprint, 	arXiv:2603.27442, 2026.

[12] S.\ Jamil, and R. \ Kapadia.\ ``Lie Generator Networks for Nonlinear Partial Differential Equations,'' arXiv preprint,  arXiv:2603.29264, 2026.

[13] A.\ Gokaslan and V.\ Cohen.\ OpenWebText corpus.\ \url{http://Skylion007.github.io/OpenWebTextCorpus}, 2019.

[14] S.\ Merity, C.\ Xiong, J.\ Bradbury, and R.\ Socher.\ ``Pointer sentinel mixture models,'' in \textit{International Conference on Learning Representations (ICLR)}, 2017.

[15] T.\ Dao, D.\ Y.\ Fu, S.\ Ermon, A.\ Rudra, and C.\ R\'e.\ ``FlashAttention: Fast and memory-efficient exact attention with IO-awareness,'' in \textit{Advances in Neural Information Processing Systems 35}, 2022.

[16] A.\ Katharopoulos, A.\ Vyas, N.\ Pappas, and F.\ Fleuret.\ ``Transformers are RNNs: Fast autoregressive transformers with linear attention,'' in \textit{Proceedings of the 37th International Conference on Machine Learning (ICML)}, 2020.

[17] A.\ Gu, K.\ Goel, and C.\ R\'e.\ ``Efficiently modeling long sequences with structured state spaces,'' in \textit{International Conference on Learning Representations (ICLR)}, 2022.

[18] S.\ Yang, B.\ Wang, Y.\ Shen, R.\ Panda, and Y.\ Kim.\ ``Gated linear attention transformers with hardware-efficient training,'' in \textit{Proceedings of the 41st International Conference on Machine Learning (ICML)}, 2024.

[19] K.\ Choromanski, V.\ Likhosherstov, D.\ Dohan, X.\ Song, et al.\ ``Rethinking attention with Performers,'' in \textit{International Conference on Learning Representations (ICLR)}, 2021.

[20] B.\ Peng, E.\ Alcaide, Q.\ Anthony, et al.\ ``RWKV: Reinventing RNNs for the transformer era,'' in \textit{Findings of the Association for Computational Linguistics: EMNLP}, 2023.

[21] Y.\ Sun, L.\ Dong, S.\ Huang, S.\ Ma, et al.\ ``Retentive network: A successor to transformer for large language models,'' arXiv preprint arXiv:2307.08621, 2023.

[22] S.\ Biderman, H.\ Schoelkopf, Q.\ Anthony, et al.\ ``Pythia: A suite for analyzing large language models across training and scaling,'' in \textit{Proceedings of the 40th International Conference on Machine Learning (ICML)}, 2023.
}

\appendix

\section{Evaluation Sequences and Decomposition Protocol}\label{app:sequences}

The pretrained model analysis in Section~3 uses six text sequences spanning different domains and lengths:
\begin{enumerate}
\item ``The cat sat on the mat and then it slowly walked to the door.'' (15 tokens)
\item ``Although the government proposed sweeping reforms to the healthcare system last year, the legislature has not yet passed any of the key provisions that were originally outlined in the draft bill submitted by the committee.'' (38 tokens)
\item ``In fluid dynamics, the Navier--Stokes equations describe the motion of viscous fluid substances. These partial differential equations arise from applying Newton's second law to fluid motion, together with the assumption that the stress in the fluid is the sum of a diffusing viscous term and a pressure term.'' (49 tokens)
\item ``She told him that the book he had lent her, which she had finally finished reading over the weekend, was one of the most thought-provoking novels she had encountered in years.'' (35 tokens)
\item ``The quick brown fox jumps over the lazy dog.'' (10 tokens)
\item ``Scientists at CERN announced that the particle accelerator had produced results consistent with theoretical predictions made decades ago, confirming that the Standard Model remains robust despite numerous attempts to find physics beyond it.'' (37 tokens)
\end{enumerate}
All $\rho$ and eigenvalue statistics are averaged across these six sequences per head. Token counts reflect GPT-2 BPE tokenization.

For autoregressive models (GPT-2 variants), the causal mask sets the upper triangle of $A$ to $-\infty$. This is an architectural constraint, not part of the learned interaction. Applying it before decomposition would flood $R$ and $F$ with mask artifacts. We decompose the unmasked $A$. For bidirectional models (BERT), no mask is applied and the distinction does not arise. Because GPT-2's weights were trained under a causal objective, where only the lower triangle of attention influences the loss, the learned $W_Q W_K^{\top}$ encodes different structure than BERT's weights, which were trained bidirectionally. The structural difference illustrated in Figure~\ref{fig:pretrained}(b) is a property of the learned interaction kernels, not an artifact of masking.

\section{The Spectral Budget of Attention}\label{app:budget}

We take a pretrained GPT-2 Large (baseline perplexity $14.11$ on WikiText-2 val [14]), intercept the attention interaction matrix at runtime, decompose into $R + F$, truncate each component independently to a target rank via SVD, and evaluate perplexity. No retraining is done.

\begin{figure}[!h]
\centering
\begin{subfigure}{0.4\linewidth}
  \centering
  \includegraphics[height=4.5cm, keepaspectratio]{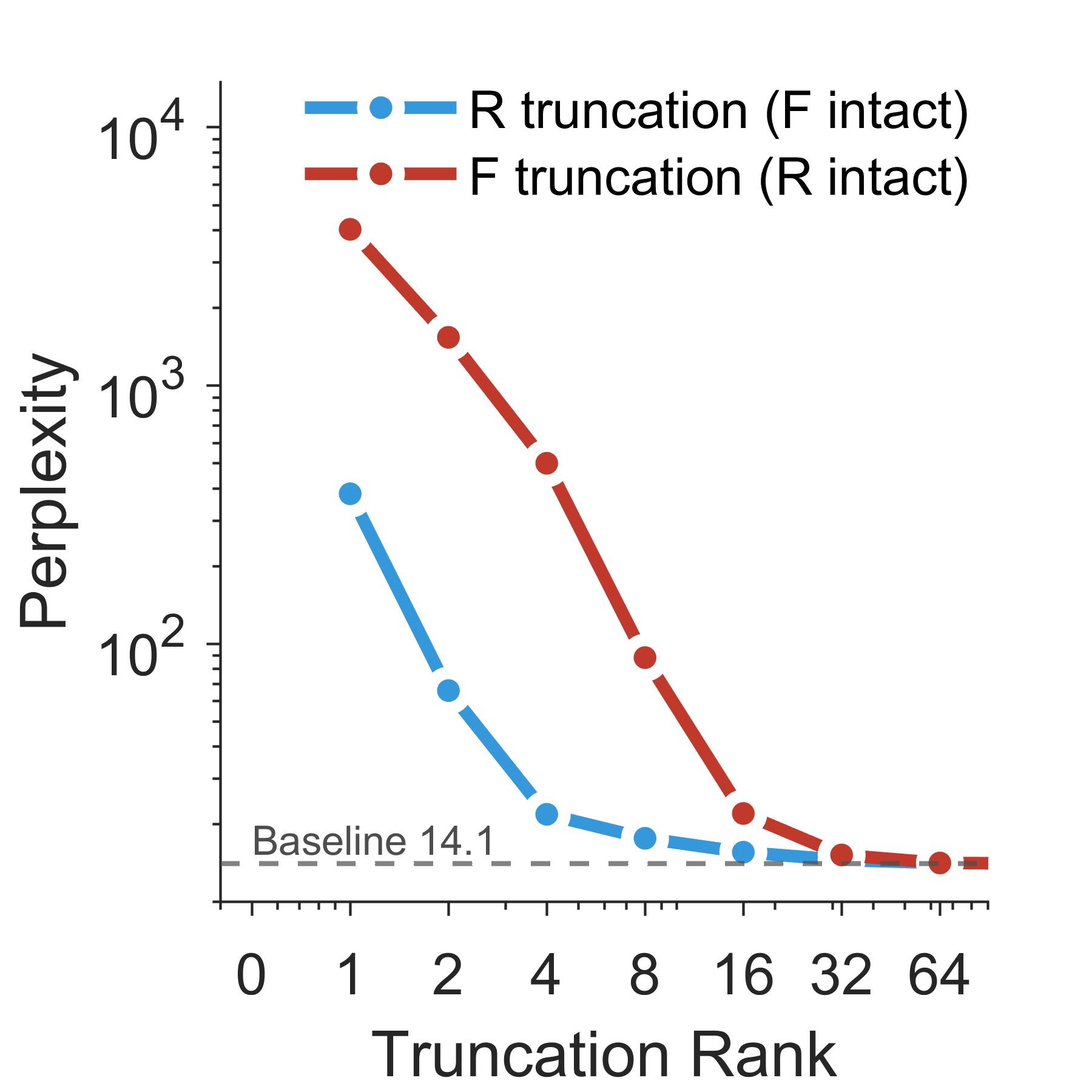}
\end{subfigure}
\hfill
\begin{subfigure}{0.5\linewidth}
  \centering
  \includegraphics[height=4.5cm, keepaspectratio]{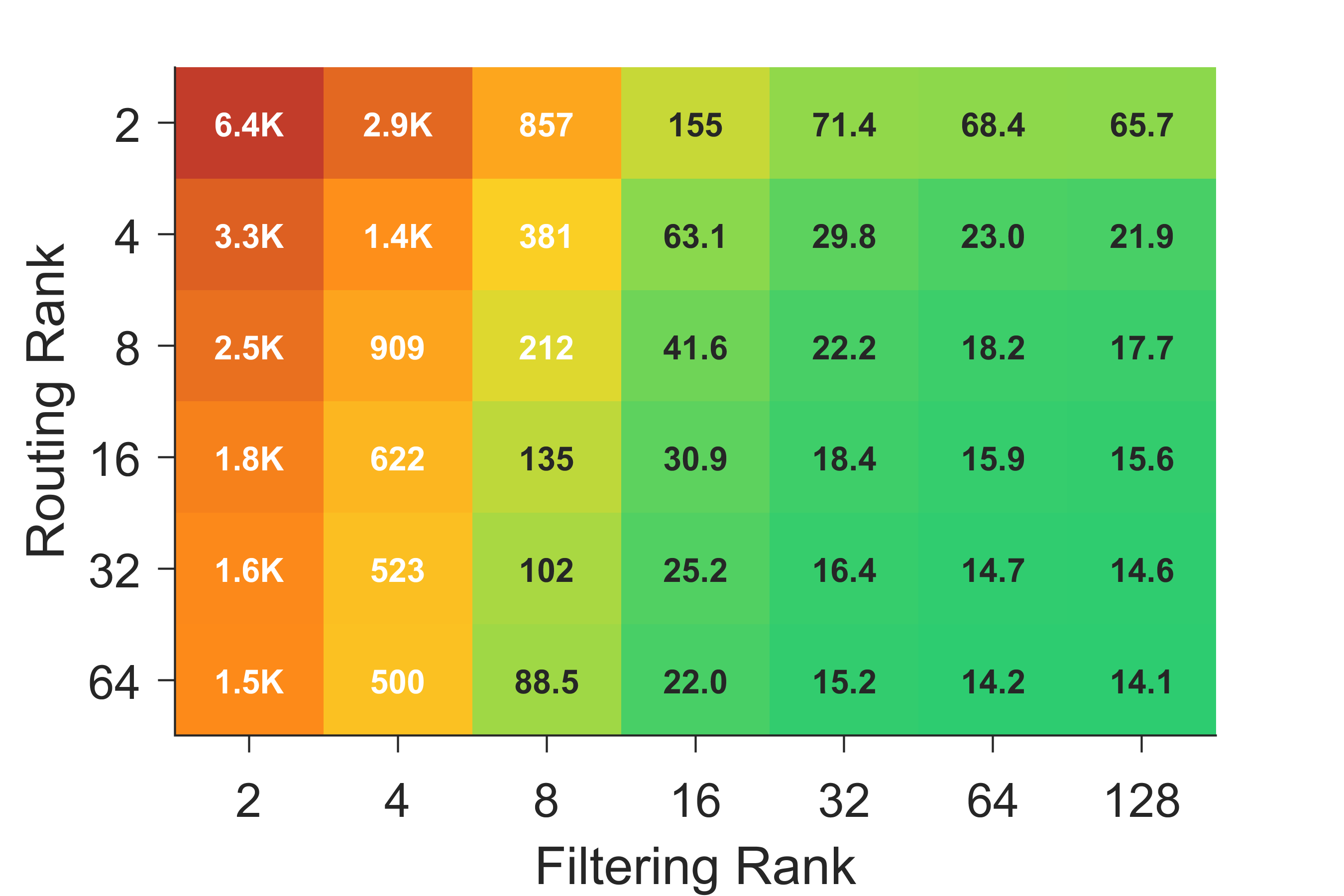}
\end{subfigure}
\caption{The spectral budget of attention in GPT-2 Large (WikiText-2 val, baseline PPL $14.1$). (a) Perplexity after independently truncating routing (blue) and filtering (red) to a given rank while keeping the counterpart full rank. (b) Perplexity under joint truncation across all routing--filtering rank combinations.}
\label{fig:budget}
\end{figure}

Removing all routing ($R = 0$, filtering intact) yields PPL $985$. However, removing all filtering ($F = 0$, routing full rank) yields PPL $12{,}034$ which is ${\sim}12\times$ worse. The asymmetry persists across the full rank sweep (Fig.~\ref{fig:budget}a). At rank $8$, filtering truncation gives PPL $88$ while routing truncation gives PPL $18$. At rank $32$, filtering gives PPL $15.2$ ($+7.7\%$) while routing gives PPL $14.6$ ($+3.1\%$). The component that $\rho$ identifies as dominant is functionally dominant. Routing converges at rank $32$ while filtering converges at rank $64$ beyond which the improvement is marginal. Figure~\ref{fig:budget}(b) shows this directly: near-baseline perplexity occupies a small corner of the rank space.

This redundancy is a consequence of entangling routing and filtering in a single unconstrained matrix. The model cannot organize its routing across depth because it has no structural separation between the two functions. Therefore, it compensates by overinvesting in filtering.

\section{The Diagonal Offset \texorpdfstring{$\varepsilon$}{epsilon}}\label{app:epsilon}

The $S$--$D$ parameterization $L = S - D$ constrains $\mathrm{Re}(\lambda) \le 0$ for any $D \succeq 0$. We parameterize
\begin{equation*}
d_i = \mathrm{softplus}(W_d \cdot x_i + b_d) + \varepsilon
\end{equation*}
with $b_d$ a learnable per-head vector and $\varepsilon$ a fixed offset. The role of $\varepsilon$ is to keep eigenvalues away from the imaginary axis during training, where the eigenvalue problem becomes ill-conditioned. The offset does not affect the structural guarantee, but its absence in coordinates where $\mathrm{softplus}(W_d \cdot x_i + b_d)$ approaches zero leads to optimization failure. We trained the $S$--$D$ no-LN configuration at four values of $\varepsilon$ on 125M parameter models. Outcomes are summarized below.
\begin{figure}[!t]
\centering
\includegraphics[width=0.4\linewidth]{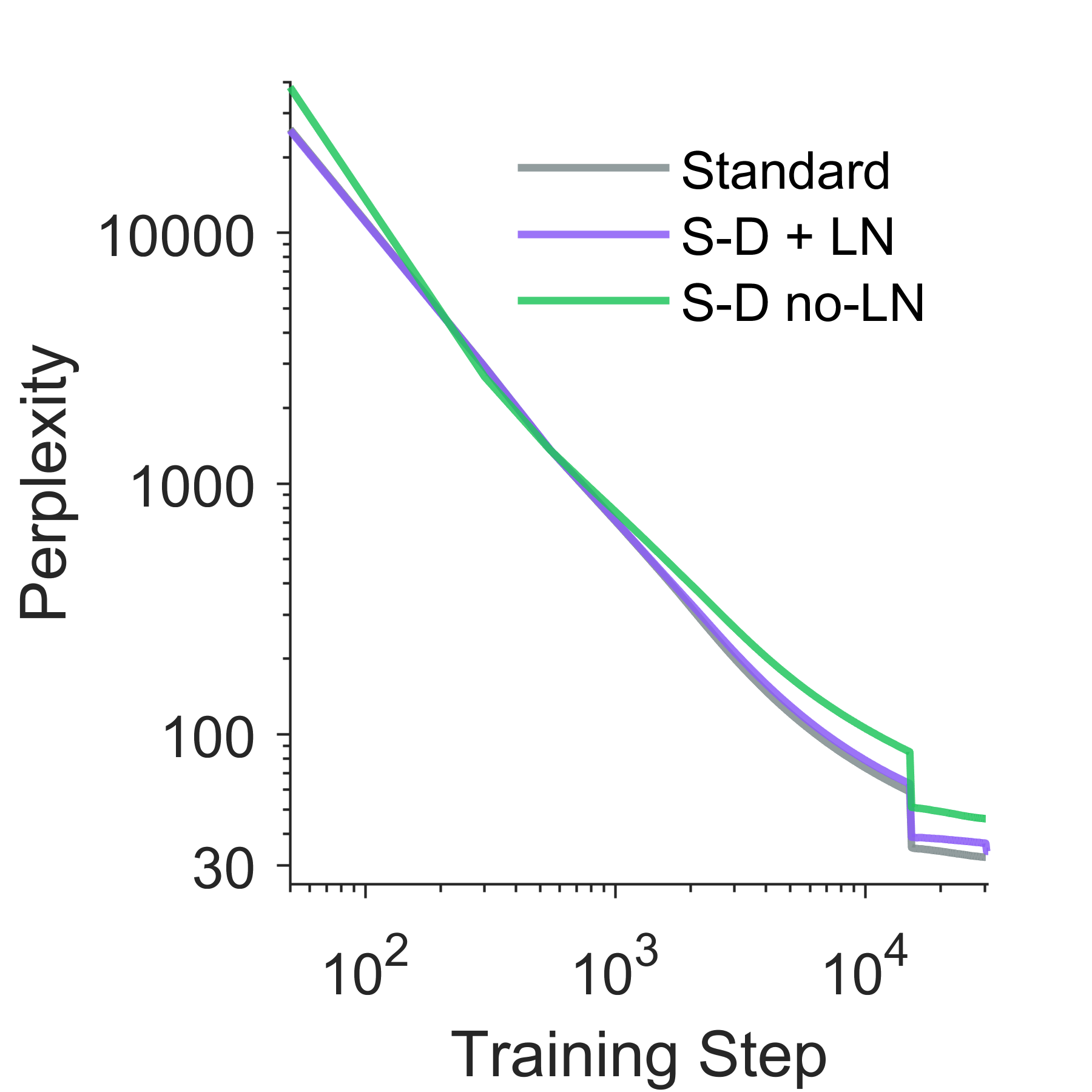}
\caption{$S$--$D$ attention at 125M parameters (OpenWebText). Training perplexity. $S$--$D$ without LayerNorm trains stably over 28{,}000 steps with no divergence.}
\label{fig:training}
\end{figure}

\begin{table}[!h]
\caption{$S$--$D$ attention convergence outcomes across $\varepsilon$ values.}
\label{tab:epsilon}
\centering
\begin{tabular}{ccccc}
\toprule
$\varepsilon$ & Seeds & Converged & Diverged & Val PPL \\
\midrule
$0$ & $3$ & $0$ & $3$ & --- \\
$0.01$ & $3$ & $1$ & $2$ & $37.5$ ($n{=}1$) \\
$0.05$ & $3$ & $3$ & $0$ & $37.39 \pm 0.09$ \\
$0.10$ & $1$ & $1$ & $0$ & $37.38$ \\
\bottomrule
\end{tabular}
\end{table}

Without an offset, all three seeds developed NaN losses during training: at optimizer steps $3{,}350$, $8{,}450$, and $18{,}000$. The variability in the divergence step suggests the failure is not purely deterministic; it depends on the random trajectory through parameter space. At $\varepsilon = 0.01$, one seed converged to PPL $37.5$; one seed produced loss values that grew to ${\sim}28$ with PPL ${\sim}5 \times 10^8$ (a numerical blow-up rather than NaN); one seed produced NaN at step $5{,}550$. At $\varepsilon = 0.05$ and above, all attempted seeds converged. The three $\varepsilon = 0.05$ seeds reached val PPL $37.29$, $37.37$, and $37.50$ (mean $37.39$, $\sigma=0.09$).

When $\mathrm{softplus}(W_d \cdot x_i + b_d)$ approaches zero in some heads at some training steps, the eigenvalues of $L = S - D$ approach the imaginary axis, where the eigendecomposition becomes ill-conditioned. The result is that small perturbations propagate into large parameter updates, eventually producing NaN. The offset $\varepsilon$ establishes a fixed margin from the imaginary axis that absorbs gradient noise.

The choice of $\varepsilon = 0.05$ in the main text is the smallest swept value at which all seeds converged. Higher values ($\varepsilon = 0.10$) also converged with similar final PPL but were not swept further; we did not sweep below $\varepsilon = 0.01$ or characterize the precise threshold between marginal stability ($\varepsilon = 0.01$) and reliable stability ($\varepsilon = 0.05$). Higher values would be expected to bias the model toward over-damping and degrade performance, but this regime was not tested.

\section{Emergence of the Cascade}\label{app:emergence}

\begin{figure}[!h]
\centering
\begin{subfigure}{0.45\linewidth}
  \centering
  \includegraphics[height=5cm, keepaspectratio]{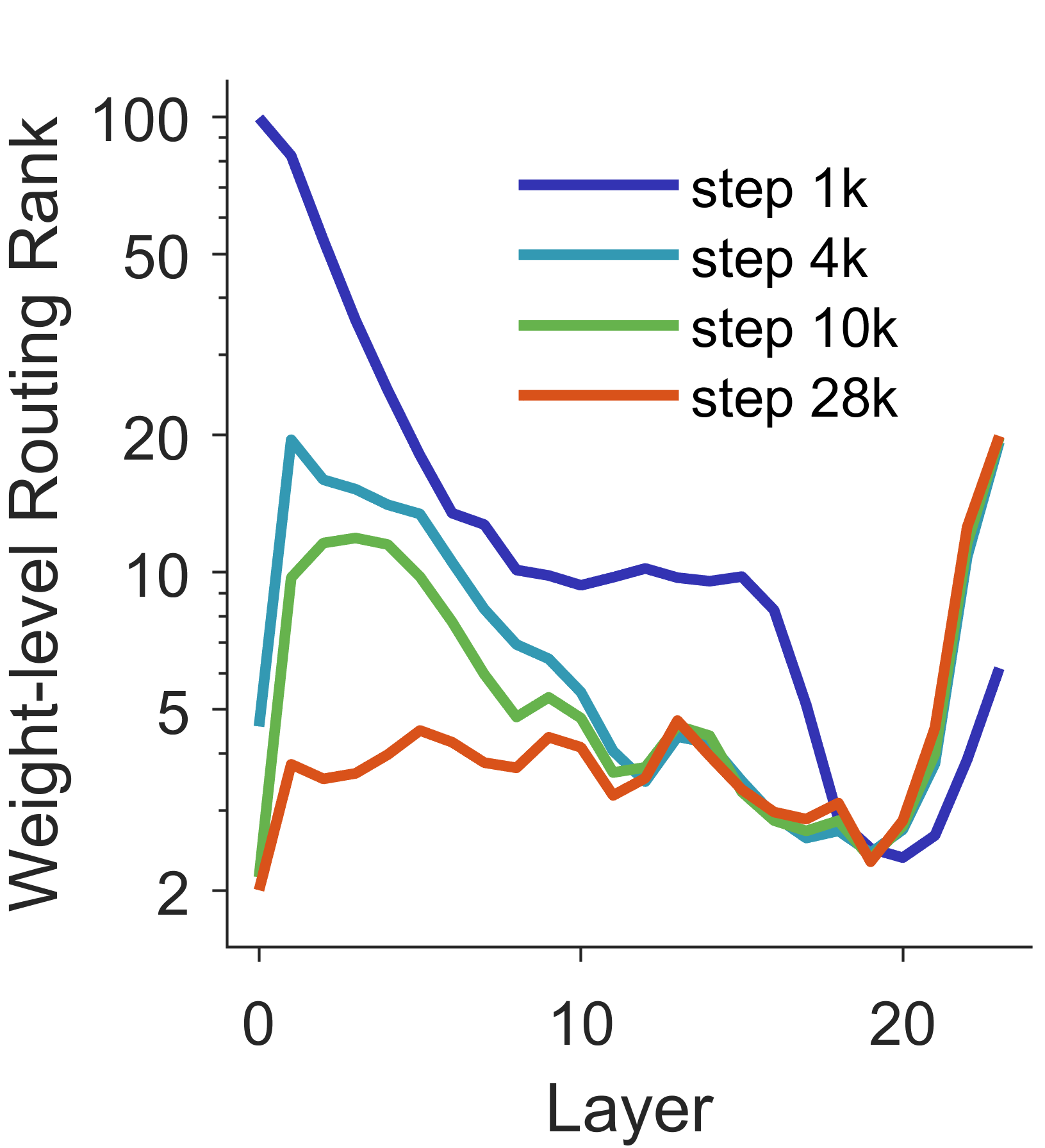}
\end{subfigure}
\hfill
\begin{subfigure}{0.45\linewidth}
  \centering
  \includegraphics[height=5cm, keepaspectratio]{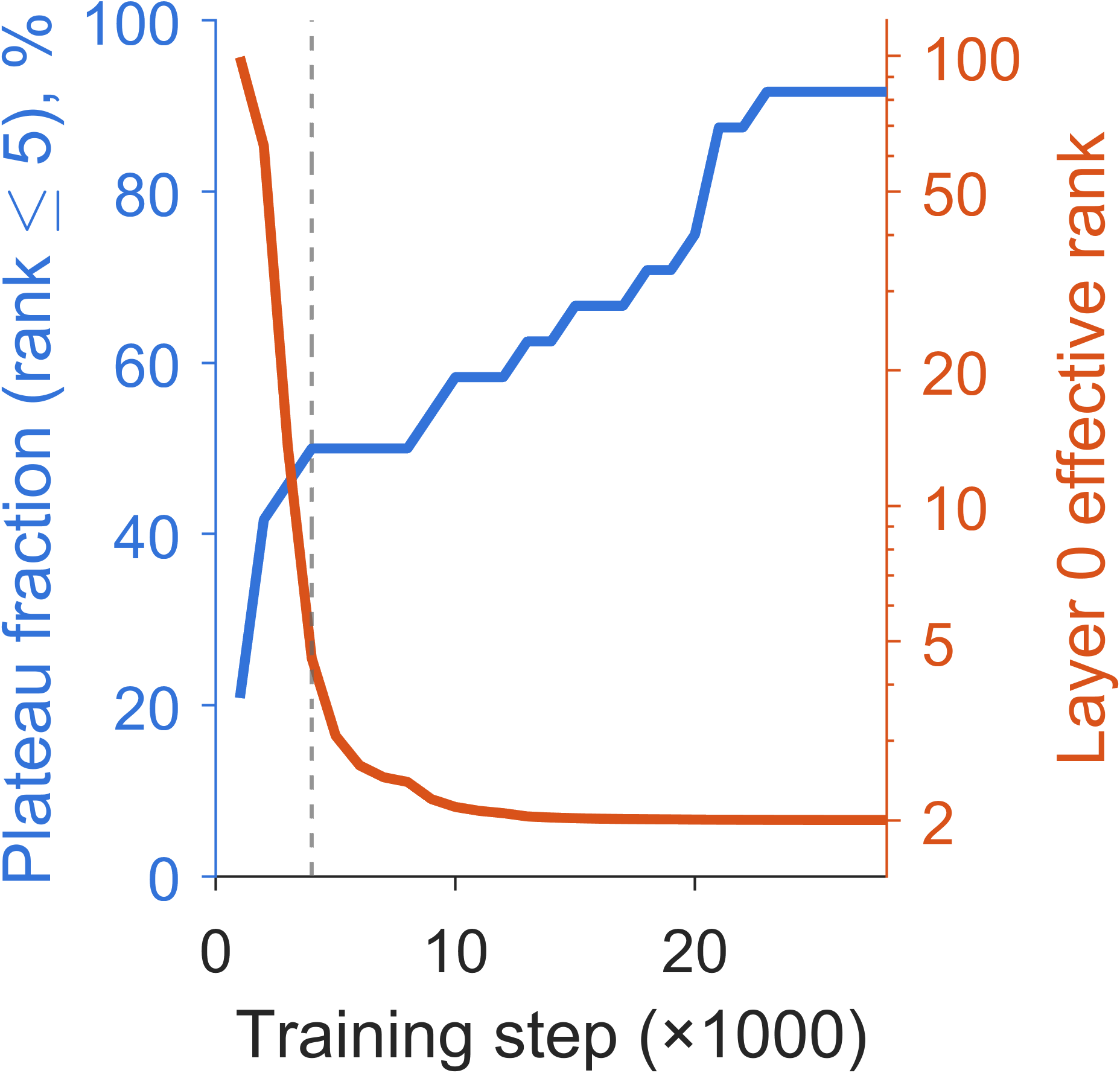}
\end{subfigure}
\caption{Cascade emergence during 355M training. (a) Per-layer effective routing rank at log-spaced training steps ($1$k, $4$k, $10$k, $28$k), showing shape formation and low-rank widening. (b) Low-rank fraction (layers at rank $\le 5$) and layer-0 rank vs.\ training step. The terminal rank converges by step ${\sim}4$k; the plateau widens from $50\%$ to $92\%$ over the subsequent $24$k steps.}
\label{fig:emergence}
\end{figure}

The cascade in $S$--$D$ attention is not present at initialization. It emerges over training in two distinguishable phases. We log the per-layer effective routing rank every $1{,}000$ steps during the 355M $S$--$D$ no-LN training run ($28{,}000$ steps total) and report the trajectory in Fig.~\ref{fig:emergence}.

\paragraph{Phase 1: shape formation (steps 1k--4k).} At step $1{,}000$, layer 0 effective rank is ${\sim}100$; the projections have been updated only briefly and the per-head kernel is still close to its random initialization. By step $4{,}000$, layer 0 has collapsed to rank $4.58$ and the terminal layer has reached its final amplitude (rank ${\sim}20$). The qualitative shape of the cascade, low at the front, high at the back, is established within the first phase of training.

\paragraph{Phase 2: low-rank widening (steps 4k--28k).} The terminal ranks are essentially stationary after phase 1. The fraction of layers operating at rank $\le 5$ grows monotonically: $50\%$ at step $4$k, $58\%$ at step $10$k, $75\%$ at step $20$k, $92\%$ by step $23$k (where it stabilizes).

The two phases describe complementary processes. Phase 1 establishes that the optimizer organizes routing into a depth-asymmetric shape; a few high-rank layers at the back, low-rank layers elsewhere. Phase 2 progressively consolidates the low-rank region. As training continues, the optimizer discovers that successive layers can operate at minimum rank without PPL increase, and the high-rank work retreats toward the terminal layer. The final $92\%$ plateau is not the initial decomposition; it is the converged result of the optimizer compressing routing into the smallest depth slice that the architecture allows.

This trajectory clarifies the Section~\ref{sec:routing-cascade} finding. The cascade is not a property of the parameterization that snaps into place, it is the equilibrium the optimizer arrives at when filtering and depth-equalization no longer obstruct the natural depth asymmetry. The shape emerges fast; the plateau widens slowly.

\end{document}